\pdfoutput=1
\documentclass[10pt, logo, onecolumn, copyright]{nv}

\usepackage[utf8]{inputenc}      % UTF-8 input
\usepackage[T1]{fontenc}         % 8-bit T1 fonts
\usepackage{times}               % Times Roman body font (SANA style)
\usepackage{inconsolata}         % monospaced font
\usepackage{microtype}           % microtypography

\usepackage{amsmath}
\usepackage{amssymb}
\usepackage{amsfonts}            % blackboard math symbols
\usepackage{amsthm}
\usepackage{mathtools}
\usepackage{bm}
\usepackage{bbm}
\usepackage{nicefrac}            % compact symbols for 1/2, etc.

\usepackage[table]{xcolor}       % colours (table option pulls in colortbl)
\usepackage{booktabs}            % professional-quality tables
\usepackage{multirow}
\usepackage{multicol}

\usepackage{graphicx}
\usepackage{wrapfig}
\usepackage{float}
\usepackage{tikz}
\usepackage{pgfplots}
\pgfplotsset{compat=1.18}
\usepgfplotslibrary{fillbetween}
\usepgflibrary{patterns}
\usetikzlibrary{arrows.meta,positioning,calc,shapes.geometric,decorations.text,
  patterns,patterns.meta,fit,backgrounds,chains,shadows,math,circuits.ee.IEC,
  decorations.pathmorphing,decorations.pathreplacing,decorations.shapes}

\usepackage[labelfont=bf,font=small]{caption}
\usepackage{subcaption}          % replaces the deprecated 'subfigure'
\usepackage{listings}
\usepackage{minted}              % NOTE: requires pdflatex -shell-escape + Pygments

\usepackage{url}
\usepackage{textcomp}
\usepackage{pifont}
\usepackage{scalefnt}
\usepackage{tcolorbox}
\usepackage{fontawesome}
\usepackage{lipsum}              % placeholder text only -- remove for final

\usepackage[square,sort,comma,numbers]{natbib}
\let\cite\citep                  % route every \cite through \citep

\usepackage{hyperref}
\definecolor{nvidiagreen}{HTML}{76B900}
\hypersetup{
  colorlinks=true,
  breaklinks=true,
  pdfusetitle=true,
  urlcolor=nvidiagreen,
  linkcolor=nvidiagreen,
  citecolor=nvidiagreen,
}
\definecolor{codebg}{RGB}{245,245,245}
\definecolor{keywordcolor}{RGB}{0,0,153}
\definecolor{commentcolor}{RGB}{34,139,34}
\definecolor{stringcolor}{RGB}{163,21,21}
\definecolor{numbercolor}{RGB}{128,128,128}
\definecolor{brickred}{HTML}{b92622}
\definecolor{midnightblue}{HTML}{005c7f}
\definecolor{salmon}{HTML}{f1958d}
\definecolor{burntorange}{HTML}{f19249}
\definecolor{junglegreen}{HTML}{4dae9d}
\definecolor{forestgreen}{HTML}{499c5e}
\definecolor{pinegreen}{HTML}{3d8a75}
\definecolor{seagreen}{HTML}{6bc1a2}
\definecolor{limegreen}{HTML}{97c65a}
\definecolor{violet}{HTML}{8f00ff}
\definecolor{pastelviolet}{HTML}{cb99c9}
\definecolor{darkcyan}{HTML}{008B8B}

\usepackage{amsmath,amsfonts,bm}

\def\eqref#1{equation~\ref{#1}}
\def\1{\bm{1}}

\DeclareMathAlphabet{\mathsfit}{\encodingdefault}{\sfdefault}{m}{sl}
\SetMathAlphabet{\mathsfit}{bold}{\encodingdefault}{\sfdefault}{bx}{n}

\newcommand{\softmax}{\mathrm{softmax}}

\newcommand{\KL}{D_{\mathrm{KL}}}

\definecolor{sagelight}{HTML}{E3E9DB}
\definecolor{greensoft}{HTML}{B2CF81}
\definecolor{greenfresh}{HTML}{9BC653}
\definecolor{nvgreen}{HTML}{87B737}
\definecolor{greenmoss}{HTML}{71913B}
\definecolor{greenolive}{HTML}{658929}
\definecolor{greendeep}{HTML}{516E21}
\definecolor{greendark}{HTML}{3F541C}
\definecolor{lilacpale}{HTML}{CDD0EB}
\definecolor{lilacsoft}{HTML}{9FA7D0}
\definecolor{lilacblue}{HTML}{737EAE}
\definecolor{slateblue}{HTML}{4C567D}
\definecolor{steelblue}{HTML}{536A82}
\definecolor{navydark}{HTML}{2C455B}
\colorlet{cellbest}{nvgreen!65}      % table cell background, best result (salient, matches figure green)
\colorlet{cellsecond}{lilacpale!80}  % table cell background, second-best result (subtle)
\colorlet{linkaccent}{slateblue}     % hyperref link and URL color
\colorlet{citeaccent}{nvgreen!85!greendark} % citation-number color, vivid but not light
\colorlet{promptframe}{slateblue}    % prompt-box frame
\colorlet{prompttitle}{lilacpale!40} % prompt-box title band
\newtcolorbox{promptbox}[1]{
  enhanced, breakable,
  colback=white,
  colframe=promptframe,
  colbacktitle=prompttitle,
  coltitle=black,
  fonttitle=\small\bfseries,
  fontupper=\small\ttfamily,
  title={#1},
  boxrule=0.6pt,
  arc=2pt,
  left=6pt, right=6pt, top=5pt, bottom=5pt,
}
\colorlet{promptph}{lilacpale!75}    % placeholder chip, e.g. {task description}
\colorlet{prompttag}{greendeep}      % tags the model must emit, e.g. <think>
\colorlet{promptcardbg}{lilacpale!18} % embedded JSON schema, output block, or calibration example
\newcommand{\ph}[1]{{\setlength{\fboxsep}{1pt}\colorbox{promptph}{\{#1\}}}}
\newcommand{\ptag}[1]{\textcolor{prompttag}{\textbf{#1}}}
\newcommand{\promptsep}{\par\vspace{3pt}\noindent\tikz\draw[promptframe!45, dash pattern=on 2pt off 2pt, line width=0.4pt] (0,0) -- (\linewidth,0);\par\vspace{3pt}}

\newtcolorbox{promptcard}{enhanced, colback=promptcardbg, frame hidden, arc=2pt, left=4pt, right=4pt, top=2pt, bottom=2pt, before skip=3pt, after skip=3pt, fontupper=\small\ttfamily}
\colorlet{excard}{lilacpale!22}      % background of one turn card
\colorlet{exbadge}{lilacblue}        % turn-number badge
\colorlet{exaction}{greendeep}       % [Action] label
\colorlet{exobs}{slateblue}          % [Observation] label
\colorlet{exhighlight}{greensoft!55} % evidence that decides the next action
\colorlet{exsuccess}{nvgreen}        % success banner
\newtcolorbox{examplebox}[1]{
  enhanced,
  colback=white, colframe=promptframe, colbacktitle=prompttitle, coltitle=black,
  fonttitle=\small\bfseries, fontupper=\small, title={#1},
  boxrule=0.6pt, arc=2pt, left=6pt, right=6pt, top=5pt, bottom=5pt,
}
\newcommand{\extask}[2]{\noindent\textbf{#1:} #2\par\vspace{3pt}}
\newcommand{\exinit}[1]{\noindent{\footnotesize\textcolor{exobs}{\textbf{[Initial observation]}}~#1}\par\vspace{2pt}}
\newcommand{\exbadge}[1]{\tikz[baseline=(b.base)]\node[circle, fill=exbadge, text=white, font=\scriptsize\bfseries, inner sep=0pt, minimum size=1.45em](b){#1};}
\newcommand{\exturn}[4][Observation]{%
  \begin{tcolorbox}[enhanced, colback=excard, frame hidden, arc=2pt, left=3pt, right=4pt, top=1.5pt, bottom=1.5pt, before skip=2pt, after skip=2pt]%
  \exbadge{#2}\hspace{5pt}%
  \parbox[t]{\dimexpr\linewidth-1.45em-5pt\relax}{\raggedright%
    \textcolor{exaction}{\textbf{[Action]}}~\texttt{#3}%
    \if\relax\detokenize{#4}\relax\else\\[1pt]{\footnotesize\textcolor{exobs}{\textbf{[#1]}}~#4}\fi}%
  \end{tcolorbox}}
\newcommand{\exhl}[1]{{\setlength{\fboxsep}{1pt}\colorbox{exhighlight}{#1}}}
\newcommand{\exoutcome}[1]{%
  \begin{tcolorbox}[enhanced, colback=exsuccess, frame hidden, arc=2pt, halign=center, top=2pt, bottom=2pt, before skip=5pt, after skip=0pt, fontupper=\small\bfseries\color{white}]%
  \checkmark~Success: #1%
  \end{tcolorbox}}

\usepackage{algorithm}       % floating algorithm environment
\usepackage{algpseudocode}   % \Procedure, \State, \For, \If, \Call, \Comment
\tcbuselibrary{most}         % enhanced, breakable prompt boxes in the appendix

\DeclareCaptionFont{eightpt}{\fontsize{8}{9.5}\selectfont}
\AddToHook{env/wrapfigure/begin}{\captionsetup{font=eightpt}}
\AddToHook{env/wraptable/begin}{\captionsetup{font=eightpt}}

\setlist{topsep=2pt, partopsep=0pt, itemsep=3pt, parsep=0pt}

\newcommand{\na}{--}
\newcommand{\method}{\texorpdfstring{\mbox{\textsc{\textls[0]{PivotOPD}}}}{PivotOPD}}

\newtheorem{lemma}{Lemma}
\newtheorem{proposition}{Proposition}

\newcommand{\pib}{\pi_{\bar\theta}}
\newcommand{\act}{\operatorname{act}}

\usepackage{cleveref}
\crefname{equation}{Eq.}{Eqs.}
\Crefname{equation}{Eq.}{Eqs.}
\creflabelformat{equation}{#2#1#3}
\crefname{algorithm}{algorithm}{algorithms}
\Crefname{algorithm}{Algorithm}{Algorithms}
\crefname{proposition}{proposition}{propositions}
\Crefname{proposition}{Proposition}{Propositions}
\crefname{lemma}{lemma}{lemmas}
\Crefname{lemma}{Lemma}{Lemmas}
\crefname{appendix}{appendix}{appendices}
\Crefname{appendix}{Appendix}{Appendices}

\title{%
\centering
\method{}: Learning to Recover from\\
Pivotal Mistakes in Multi-Turn Agents%
}


\hypersetup{
  pdftitle={PivotOPD: Learning to Recover from Pivotal Mistakes in Multi-Turn Agents},
  pdfauthor={Yinghui He, Yapei Chang, Khushi Bhardwaj, Daniele Molinari, Tugrul Konuk, Jan Kautz, Ali Hatamizadeh},
}

\author{%
\vspace{-1.5em}
\centering
\fontsize{10.5pt}{18pt}\selectfont
Yinghui~He$^{1,2*\dagger}$ ~~ Yapei~Chang$^{2,3}$ ~~ Khushi~Bhardwaj$^{2}$ ~~ Daniele~Molinari$^{2}$
\\
\bfseries\fontsize{10.5pt}{18pt}\selectfont
Tugrul~Konuk$^{2}$ ~~ Jan~Kautz$^{2}$ ~~ Ali~Hatamizadeh$^{2\dagger}$
\\
\vspace{1.3mm}
{\small $^{1}$Princeton University ~~~ $^{2}$NVIDIA
~~~ $^{3}$University of Maryland
}
\\
\vspace{1.3mm}
{\tt\small yh0068@princeton.edu, ~ahatamizadeh@nvidia.com}
\\
\vspace{1.3mm}
{\small \textcolor{nvidiagreen}{\faGlobe}~~Project page: \href{https://research.nvidia.com/labs/lpr/pivotopd/}{\nolinkurl{https://research.nvidia.com/labs/lpr/pivotopd/}}}
\vspace{0.3em}
}

\correspondingauthor={yh0068@princeton.edu, ahatamizadeh@nvidia.com}

\begin{abstract}
\noindent \textbf{Abstract:} On-policy distillation (OPD) is a promising approach for training language agents, providing dense teacher supervision on student-generated trajectories. However, in multi-turn interaction, an incorrect action changes the states the student encounters later, so errors compound across turns. In our preliminary experiments across three Qwen3 models (8B--235B), we find that more than half of the failed rollouts contain a \textit{pivotal mistake}, an action that moves the agent farther from completing the task, and this mistake typically occurs early. These pivotal mistakes often remain recoverable: guiding the model for only a few turns after the pivotal turn can restore task success.
We therefore propose \textbf{\method{}}, an on-policy distillation framework that jointly trains the student to prevent pivotal mistakes and to recover from the states they create. At each pivotal mistake, a teacher model provides a gold action and then names a recovery action at each of the next few turns. \textbf{\textit{Preventive distillation}} uses the gold action with reverse KL to steer the student away from the pivotal mistake, while \textbf{\textit{recovery distillation}} uses the recovery actions with forward KL to transfer recovery behaviors that the student rarely samples.
Against $13$ baselines on ALFWorld, WebShop, and Search-based QA, \method{} achieves the strongest average performance for both Qwen3-1.7B and Qwen3-8B students, improving over the strongest baseline on ALFWorld by $+5.5\%$ with the 1.7B student. The gains also transfer to another model family on the software engineering domain, where \method{} raises the resolve rate of a Nemotron-3.5 student on SWE-Bench Verified by $+3.2\%$.
\end{abstract}

\begin{document}
\maketitle

\begin{figure}[H]
\centering
\includegraphics[width=0.92\linewidth]{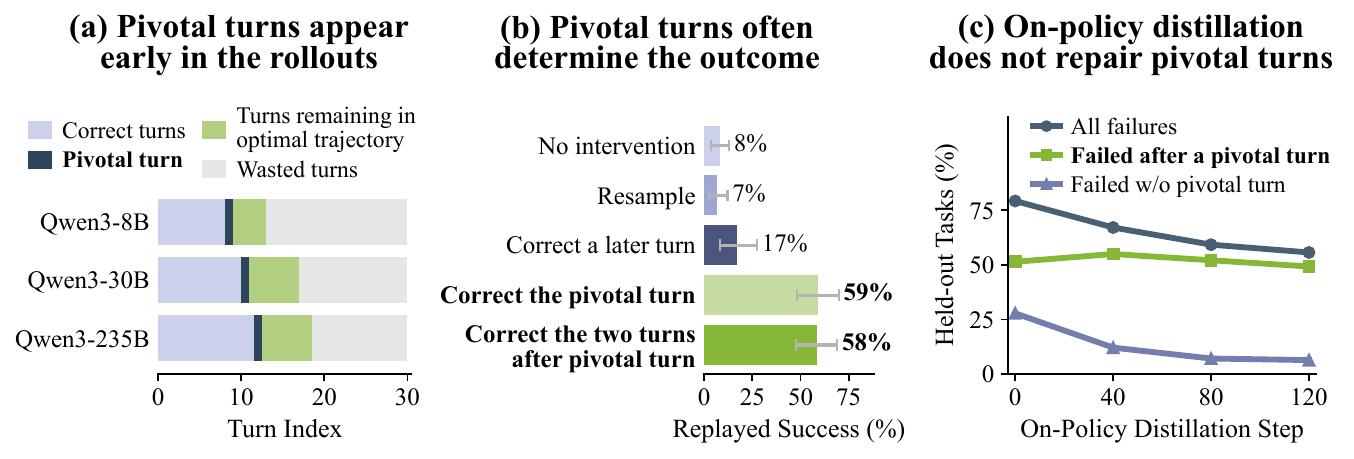}
\vspace{3pt}
\caption{\small \textbf{Failed rollouts trace back to an early pivotal mistake, and standard on-policy distillation does not repair it.} \textbf{(a)} Pivotal turns arrive early; the remaining optimal trajectory is short, yet students waste the turns until the end. \textbf{(b)} Correcting the pivotal turn or guiding recovery turns both restore success effectively. \textbf{(c)} OPD eliminates most incomplete-trajectory failures, but pivotal-turn failures persist across training.}
\label{fig:motivation}
% \vspace{-15pt}
\end{figure}

\section{Introduction}
Language agents are increasingly capable of solving complex tasks through multi-turn interactions \citep{yao2022react,liu2023agentbench,zhou2024webarena,yang2024sweagent,yao2024taubench,shridhar2020alfworld,yao2022webshop,jin2025searchr1}. To improve their performance, recent work has adopted on-policy distillation (OPD) as an effective training paradigm \citep{zhao2026opsd,zhou2026turnopd}, since it provides dense, token-level teacher supervision on trajectories that the student generates itself \citep{agarwal2024gkd,gu2024minillm,lu2025onpolicydistillation}. However, OPD is more challenging in multi-turn settings because each action changes the external environment, which then returns new observations and determines which actions are available in later turns \citep{ross2011reduction,wang2026tcod}. A single incorrect action can lead to \emph{error accumulation}, in which the student ends up in states created by its own mistake and diverges further from the teacher over the turns that follow \citep{ross2010efficient,ross2011reduction,wang2026tcod,zhong2026sod}. Existing multi-turn OPD methods address error accumulation by reweighting turns or restricting which turns receive the distillation signal \citep{zhou2026turnopd,zhong2026sod,wang2026tcod,zhang2026stepopsd}, but leave open whether such failures hinge on a single decisive mistake and whether the student can still recover from it.

\begin{figure}[t]
\centering
\includegraphics[width=0.92\linewidth]{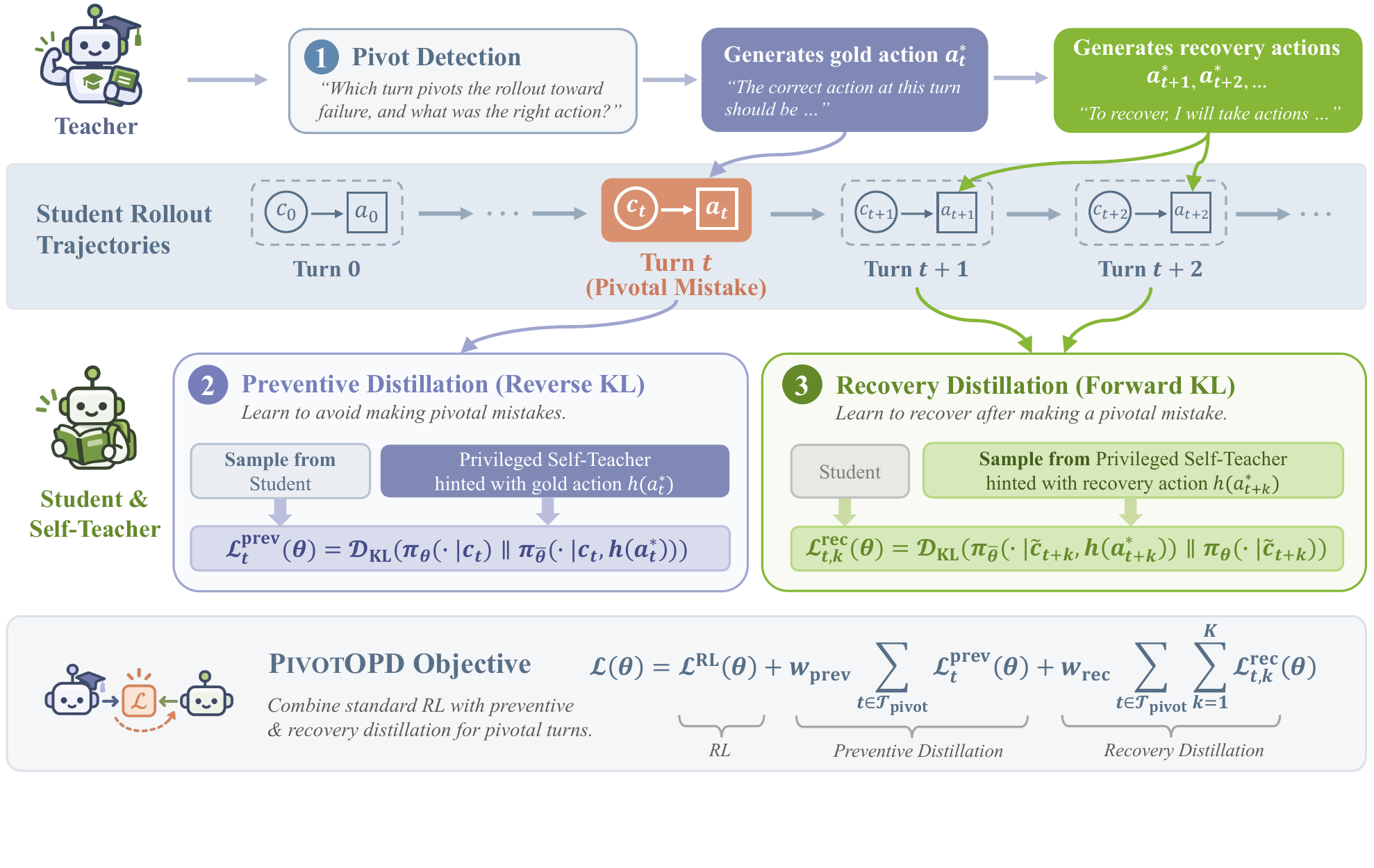}
\vspace{-20pt}
\caption{\small \textbf{Overview of \method{}.}
\textbf{(1) Pivot detection.} A teacher model reads each rollout with its outcome, selects candidate turns, and names a gold action at each. A candidate turn is pivotal when the student's committed action differs from the gold action. After each pivotal turn, the teacher names a recovery action at each of the next few turns.
\textbf{(2) Preventive distillation} trains the student to avoid the pivotal mistake. The frozen student hinted with the gold action serves as a privileged self-teacher that re-scores the student's recorded response, and a reverse KL loss moves the student toward the gold action.
\textbf{(3) Recovery distillation} trains the student to recover after the mistake. The self-teacher is instead hinted with the recovery action and writes a recovery response, on which the student is trained without the hint through a forward KL loss. Later recovery turns start from the state reached by executing its action in a copy of the environment that replays all preceding actions.
Both terms are combined with group-based RL in a single PPO update.}
\label{fig:method}
\end{figure}

To answer these questions, we analyze failed rollouts in ALFWorld \citep{shridhar2020alfworld}, where the agent completes household goals, such as heating an egg, through textual actions (\Cref{app:benchmark_examples}), and summarize the results in \Cref{fig:motivation}. Its symbolic oracle reads the full environment state and computes the remaining optimal trajectory, i.e., the shortest action sequence that completes the task, at every turn. Across three Qwen3 models (8B--235B) \citep{yang2025qwen3}, more than half of the failed rollouts contain a \emph{pivotal mistake}: an action that lengthens the remaining optimal trajectory or makes the task unsolvable. We call the turn where it is committed a \emph{pivotal turn}. The first one typically arrives early, and all three models waste most of the remaining turns without recovering (\Cref{fig:motivation}a). In counterfactual replays of Qwen3-8B's failures, correcting the pivotal turn with the oracle action raises the replayed success from $8\%$ to $59\%$ (\Cref{fig:motivation}b). To test whether a failure can be repaired after the mistake, we keep the mistake and apply the oracle action at the next two turns, which still reaches $58\%$. Therefore, pivotal mistakes are largely recoverable.
% , and these failures mainly reflect an inability to recover rather than to complete the task.

Existing training methods rarely teach this recovery, because they learn mostly from the student's own trajectories, and these rarely contain a recovery. Outcome-based RL rewards a recovery only when a rollout happens to find one, and its group-relative advantage is zero whenever every rollout of a task fails \citep{shao2024deepseekmath}. Methods that assign credit to individual turns or focus on the turns that decide the outcome share this limitation \citep{feng2025gigpo,yi2026pivotrl,yang2026opid}, and so does OPD. In our analysis, standard OPD lowers the failure rate from $79\%$ to $56\%$ of held-out tasks, while failures after a pivotal turn only fall from $51\%$ to $49\%$ (\Cref{fig:motivation}c). Although OPD reduces the probability of the committed mistake, the oracle action still has less than $1\%$ probability at every evaluated pivotal turn (\Cref{app:analysis}). The student therefore lacks a direct learning signal at the states its own mistakes create.

To provide this signal, we introduce \textbf{\method{}}, which augments group-based RL with dense token-level supervision concentrated at pivotal turns and the turns that follow them (\Cref{fig:method}). Since most environments offer no oracle, \method{} estimates pivotal turns through \emph{pivot detection}: a teacher model, a larger LLM in our main setting, reads each rollout in hindsight, selects a few candidate turns where the student may have gone wrong, and names a gold action at each. We treat a candidate turn as pivotal when the student's committed action differs from the gold action. On ALFWorld, at least one detected pivotal turn falls within one turn of the oracle-labeled one in $77.8\%$ of failed rollouts on average, at least twice the rate of randomly chosen turns (\Cref{tab:teacher_oracle}).

\method{} obtains its token-level targets from a \emph{privileged self-teacher}, the frozen student conditioned on a hint that names the teacher model's action, so the teacher only names actions \citep{zhao2026opsd,penaloza2026pid}. At each pivotal turn, \textbf{\emph{preventive distillation}} re-scores the student's recorded response with the self-teacher hinted with the gold action and minimizes a reverse KL loss, which shifts the student toward the gold action and away from the committed mistake. After each pivotal turn, \textbf{\emph{recovery distillation}} has the teacher model name a recovery action at each of the next few turns and trains the student, without the hint, on the responses that the hinted self-teacher writes toward them. Its mass-covering forward KL loss raises the probability of recovery actions that the student rarely samples \citep{kim2016sequence}.

Our contributions are as follows:
\begin{itemize}[leftmargin=3em, topsep=0pt, partopsep=0pt, itemsep=1pt, parsep=0pt]
\item \textbf{Diagnosis.} Using ALFWorld's oracle, we show that more than half of the failed rollouts contain a recoverable pivotal mistake, which standard OPD does not repair (\Cref{sec:evidence}).
\item \textbf{Method.} We propose \method{}, which uses a privileged self-teacher to train the student both to avoid pivotal mistakes and to recover from them (\Cref{sec:method}).
\item \textbf{Results.} Against $13$ baselines, \method{} achieves the best average performance on ALFWorld, WebShop, and Search-based QA for both Qwen3 students (\Cref{tab:main_results}), and at 8B it recovers from pivotal mistakes more than three times as often as standard OPD (\Cref{fig:case_study_curves}). With a Nemotron-3.5 student on SWE-Bench Verified, it also raises the resolve rate by $+3.2\%$, versus $+0.2\%$ for standard OPD (\Cref{fig:swebench}).
\end{itemize}

\section{Motivating Analysis: Agents Often Fail at a Pivotal Turn\texorpdfstring{\protect\\}{} and Rarely Recover from It}
\label{sec:evidence}

We ask whether failures hinge on a single decisive mistake, whether that mistake remains recoverable, and whether standard OPD repairs it (\Cref{fig:motivation}). On ALFWorld \citep{shridhar2020alfworld}, a symbolic oracle computes the remaining optimal trajectory from the full environment state, and we call its next action the \emph{oracle action}. We roll out Qwen3-8B, \mbox{Qwen3-30B-A3B}, and \mbox{Qwen3-235B-A22B} \citep{yang2025qwen3} on $140$ held-out tasks, replay each with this oracle, and analyze Qwen3-8B below (\Cref{app:analysis}).

\textbf{A single pivotal turn often determines the outcome, yet it is recoverable.}
Across the three models, $59\%$ of the failed trajectories contain an action that lengthens the remaining optimal trajectory or makes the task unsolvable (\Cref{app:analysis}). We call such an action a \textbf{\emph{pivotal mistake}}, and the turn where it is committed a \textbf{\emph{pivotal turn}} (formalized in \Cref{sec:formulation}). In failed trajectories with a pivotal turn, the first one typically arrives early, at a median of turn $8
$--$12$ out of $30$ (\Cref{fig:motivation}a). All three models then waste an average of $18$--$21$ more turns without recovering from it. 

Among the $72$ failed Qwen3-8B trajectories with a pivotal turn, correcting the first pivotal turn with the oracle action raises the replayed success from $8\%$ to $59\%$, whereas correcting a later turn helps far less (\Cref{fig:motivation}b). To test whether the post-mistake state is still recoverable by some admissible action sequence, we leave the mistake in place and force the oracle action at the next two turns, which still reaches $58\%$. Learning to recover after a pivotal mistake can therefore be nearly as effective as preventing it.

\textbf{Standard on-policy distillation does not repair pivotal turns.}
We train Qwen3-8B for $120$ steps with standard OPD, in which a privileged copy of the model provides token-level supervision on the model's own responses \citep{zhao2026opsd}. At each checkpoint, we use the same oracle to categorize the remaining failures (\Cref{fig:motivation}c). On held-out tasks, OPD lowers the overall failure rate by $23.6\%$ but the rate of failures after a pivotal turn by only $2.1\%$. Most of the gain comes from failures without a pivotal turn, which fall by $21.4\%$. OPD does suppress the committed mistake, moving its median probability from $0.999$ to below $10^{-5}$, yet the oracle action stays below $10^{-2}$ at every pivotal turn (\Cref{app:analysis}). A group of eight rollouts is therefore not expected to sample it even once, so it receives little direct reinforcement.

\section{\method{}: Pivot-Aware On-Policy Distillation}
\label{sec:method}

\looseness-1 \Cref{sec:evidence} shows that more than half of the failed rollouts contain a recoverable pivotal mistake that standard OPD does not repair. \method{} therefore concentrates the dense token-level signal of on-policy distillation at pivotal turns and the turns that follow them. It adds three components to group-based RL (\Cref{fig:method}): \emph{pivot detection} (\Cref{sec:pivotal_analysis}), where a teacher model names the actions that the student should take at and after pivotal turns, and \emph{preventive} and \emph{recovery distillation} (\Cref{sec:recovery}), where a privileged self-teacher provides the dense token-level targets. \Cref{sec:theory} analyzes the learning signal of recovery distillation, and \Cref{alg:method} in \Cref{app:method_details} summarizes one training step.

\subsection{Preliminaries}
\label{sec:formulation}

We model an agentic task as a partially observable Markov decision process \citep{kaelbling1998planning}. At turn $t$, the environment is in a latent state $s_t$ and emits an observation $o_t$, where $o_0$ states the task instruction. The agent holds a context $c_t = (o_0, y_0, o_1, y_1, \dots, o_t)$, which is the history of observations and responses up to the current observation. The current observation specifies the admissible actions $\mathcal{A}(c_t)$, which ALFWorld and WebShop list explicitly, Search-based QA defines as any search query or answer, and SWE-Bench defines as any call to the agent's tools or the final patch submission. The student policy generates the next response $y_t \sim \pi_\theta(\cdot \mid c_t)$, which includes free-form reasoning followed by a committed action $a_t := \act(y_t) \in \mathcal{A}(c_t)$. A trajectory $\tau = \{(o_t, y_t)\}_{t=0}^{T-1}$ records one attempt of $T$ turns and receives an outcome score $R(\tau)$ when it ends.

Following group-based RL for agents \citep{shao2024deepseekmath, feng2025gigpo}, we roll out each task $G$ times from the same initial state. The outcome scores of these rollouts yield a group-relative advantage $A^{\mathrm{RL}}$, which we optimize with the clipped PPO loss $\mathcal{L}^{\mathrm{RL}}$ \citep{schulman2017ppo}. We write $\pib$ for the student policy frozen at the start of each training step. A teacher model $M$ assists training, and in the self-distillation setting, $M$ is the student itself.

\textbf{Pivotal mistake \& pivotal turn.}
We now formalize the pivotal mistakes of \Cref{sec:evidence}. We write $L(s_t)$ for the length of the remaining optimal trajectory, which is the minimum number of turns needed to complete the task from the latent state $s_t$, with $L(s_t) = \infty$ once the task is unsolvable. A turn $t$ is a \emph{pivotal turn} if its committed action $a_t$ increases this length, i.e., $L(s_{t+1}) > L(s_t)$, and $a_t$ is then a \emph{pivotal mistake}. The pivotal turns of a trajectory $\tau$ form the set $\{t : L(s_{t+1}) > L(s_t)\}$, which can contain several turns even when $\tau$ succeeds. For each failed trajectory, \Cref{sec:evidence} uses the smallest element of this set. Pivot detection estimates these turns without access to $L$.

\subsection{Pivot Detection with a Teacher Model}
\label{sec:pivotal_analysis}

Computing $L$ requires an oracle that knows the optimal trajectory from every state (\Cref{sec:formulation}), such as the symbolic oracle of ALFWorld that \Cref{sec:evidence} uses. Since most environments offer no such oracle, \method{} instead detects pivotal turns during training with the teacher model (\Cref{fig:method}).

\textbf{Pivotal turns and gold actions.}
After collecting the rollouts of a training step, we prompt the teacher model to read each trajectory $\tau$ together with its outcome $R(\tau)$ and to select up to $m$ \emph{candidate turns} where the student may have gone wrong. At each candidate turn $t$, the teacher model also names a \emph{\textbf{gold action}} $a^{*}_t \in \mathcal{A}(c_t)$, which serves as its estimate of the oracle action. A candidate turn does not necessarily contain a mistake, since the student's committed action $a_t$ may already agree with the gold action. We therefore treat a candidate turn as pivotal only when the two disagree, i.e., $a_t \neq a^{*}_t$. The teacher-detected pivotal turns of a training step form the set $\mathcal{T}_\text{pivot}$. 
% Although ALFWorld provides an oracle, we use it only to evaluate pivot detection, so that training is identical across benchmarks.
At least one teacher-detected pivotal turn falls within one turn of the oracle-labeled pivotal turn in $77.8\%$ of failed ALFWorld trajectories on average over our two teachers, at least twice the rate of randomly chosen turns (\Cref{tab:teacher_oracle}). \Cref{app:method_details} compares pivot detection with this oracle labeling.

\textbf{Recovery actions.}
A gold action shows how to avoid a pivotal mistake, but the student also needs to learn how to continue from the state that the mistake creates. After each pivotal turn $t \in \mathcal{T}_\text{pivot}$, the teacher model names \emph{\textbf{recovery actions}} for up to $K$ \emph{recovery turns}, where $K$ is the recovery budget. We write $\tilde{c}_{t+k}$ for the context of the $k$-th recovery turn, $k = 1, \dots, K$. The first recovery turn starts from the post-mistake state, whose context is the recorded $\tilde{c}_{t+1} = c_{t+1}$, and \Cref{sec:recovery} describes how later recovery turns are reached. At each recovery turn, we query the teacher model again with $\tilde{c}_{t+k}$, and it names a recovery action $a^{*}_{t+k} \in \mathcal{A}(\tilde{c}_{t+k})$ as the next action toward completing the task.

\subsection{Preventive and Recovery Distillation}
\label{sec:recovery}

\textbf{Privileged self-teacher.}
To turn a gold or recovery action into a token-level target, we provide it to the frozen student $\pib$ as a \emph{hint}. For an action $a$, the hint $h(a)$ is a short instruction that presents $a$ as a reasonable action at the current turn and asks the model to reason toward it in its own words. Conditioning $\pib$ on this hint gives a \emph{privileged self-teacher} $\pib(\cdot \mid c, h(a))$, which differs from the student only through the information that the action provides \citep{zhao2026opsd, penaloza2026pid}. This keeps the distillation target in the student's own reasoning style.

\textbf{Preventive distillation.}
At each pivotal turn $t \in \mathcal{T}_\text{pivot}$, preventive distillation trains the student toward the self-teacher hinted with the gold action through the reverse KL loss
\begin{equation}
\mathcal{L}^{\mathrm{prev}}_{t}(\theta) = \KL\!\left( \pi_\theta\!\left(\cdot \mid c_t\right) \,\middle\|\, \pib\!\left(\cdot \mid c_t,\, h(a^{*}_t)\right) \right).
\label{eq:hint}
\end{equation}
This loss takes its expectation under the student, so we evaluate it on the student's recorded response $y_t$. Minimizing it shifts the student toward the gold action and away from the committed mistake.

\textbf{Recovery distillation.}
A reverse KL loss like \Cref{eq:hint} can only reweight responses that the student has already produced, while recovery distillation trains the student on responses that the self-teacher writes. At the $k$-th recovery turn after a pivotal turn $t$, we sample a recovery response $y_\text{rec} \sim \pib(\cdot \mid \tilde{c}_{t+k}, h(a^{*}_{t+k}))$ from the self-teacher hinted with the recovery action. We keep the response only if it commits to an action and does not refer to the hint (\Cref{app:method_details}). For $k < K$, we then execute its action $\act(y_\text{rec})$ in a copy of the environment that replays all preceding actions, which reaches the next recovery context $\tilde{c}_{t+k+1}$.

At each recovery turn, recovery distillation then trains the student toward this self-teacher through the forward KL loss
\begin{equation}
\mathcal{L}^{\mathrm{rec}}_{t,k}(\theta) = \KL\!\left( \pib\!\left(\cdot \mid \tilde{c}_{t+k},\, h(a^{*}_{t+k})\right) \,\middle\|\, \pi_\theta\!\left(\cdot \mid \tilde{c}_{t+k}\right) \right),
\label{eq:recovery}
\end{equation}
in which the student sees $\tilde{c}_{t+k}$ without the hint. This loss takes its expectation under the self-teacher, so we evaluate it on the accepted responses $y_\text{rec}$. Because the forward KL is mass-covering, minimizing it raises the student's probability of the recovery action at states that arise from its own mistakes (\Cref{sec:theory}). We select $K$ on validation and study its effect in \Cref{sec:discussion}.

\textbf{\method{} training objective.}
We combine group-based RL with preventive and recovery distillation in a single training objective,
\begin{equation}
\mathcal{L}(\theta) \;=\; \mathcal{L}^{\mathrm{RL}}(\theta)
\;+\; w_\text{prev} \sum_{t \in \mathcal{T}_\text{pivot}} \mathcal{L}^{\mathrm{prev}}_{t}(\theta)
\;+\; w_\text{rec} \sum_{t \in \mathcal{T}_\text{pivot}} \sum_{k=1}^{K} \mathcal{L}^{\mathrm{rec}}_{t,k}(\theta),
\label{eq:objective}
\end{equation}
where the sums cover the pivotal turns of the current training step and the recovery turns after each. We implement all three terms in a single PPO update over the rollout and recovery responses. The teacher model also writes brief feedback on each trajectory as a whole, which we distill at every turn with a small weight (\Cref{app:method_details}).

To implement both distillation losses within the PPO update, we assign the $\ell$-th token of a response $y$ at a context $c$ with a named action $a$ the \emph{distillation advantage}
\begin{equation}
A^{\mathrm{distill}}_\ell = \log \pib\!\left(y_\ell \mid c,\, h(a),\, y_{<\ell}\right) - \log \pib\!\left(y_\ell \mid c,\, y_{<\ell}\right),
\label{eq:distill_adv}
\end{equation}
which measures how much the hint $h(a)$ changes the frozen student's log-probability of this token. Preventive distillation adds $w_\text{prev}\, A^{\mathrm{distill}}_\ell$ to $A^{\mathrm{RL}}$ on each token of $y_t$, with $c = c_t$ and $a = a^{*}_t$, which implements a per-token form of \Cref{eq:hint}. Recovery distillation uses a clipped $w_\text{rec}\, A^{\mathrm{distill}}_\ell$ as the only advantage on each token of $y_\text{rec}$, with $c = \tilde{c}_{t+k}$ and $a = a^{*}_{t+k}$ (\Cref{app:method_details}). This update is not the gradient of \Cref{eq:recovery}, but without clipping it vanishes only at its minimizer (\Cref{lem:unclipped}).

\subsection{Theoretical Analysis: Learning Signal After a Pivotal Mistake}
\label{sec:theory}

Group-based RL and reverse KL losses like \Cref{eq:hint} train on responses that the student produces, whereas recovery distillation trains on responses that the self-teacher writes. We analyze how this difference affects the learning signal on the recovery action, treating the committed action at a recovery turn as one categorical decision (proofs in \Cref{app:theory}).

\begin{proposition}[Learning signal after a pivotal mistake]
\label{prop:rescue}
At a recovery turn, let $p$ and $q$ be the action distributions of the frozen student and its privileged self-teacher, and let $g(a^{*})$ be the expected update on the student's logit of the recovery action $a^{*}$. Then:
\textbf{(i)} at the frozen student, the recovery loss satisfies $\mathcal{L}^{\mathrm{rec}}_{t,k} \ge q(a^{*}) \log\bigl(1/p(a^{*})\bigr) - \log 2$;
\textbf{(ii)} if actions are sampled from $p$ and weighted by any per-action signal $w$, as in group-based RL or a reverse KL loss like \Cref{eq:hint}, then $g(a^{*}) = p(a^{*})\bigl(w(a^{*}) - \mathbb{E}_{p}[w]\bigr)$;
\textbf{(iii)} if actions are sampled from $q$ and weighted by the distillation advantage clipped at a bound $\delta$, as in recovery distillation, and the hint raises the probability of $a^{*}$ by a factor of at least $e^{\delta}$, then $g(a^{*}) \ge \delta\bigl(q(a^{*}) - p(a^{*})\bigr) > 0$.
\end{proposition}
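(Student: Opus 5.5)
Throughout, treat the committed action as a single categorical variable with student logits $z$, so that $\pi_\theta(a)=\softmax(z)_a$. At the frozen point we have $\pi_\theta = p$, and the hinted self-teacher gives $q$. Each of the three parts then reduces to a short computation on this categorical model.

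\textbf{Part (i).} Expand the forward KL as $\KL(q\|p)=\sum_a q(a)\log q(a) + \sum_a q(a)\log(1/p(a))$. The first sum is $-H(q)$. For the second, drop every term except $a^{*}$; this is allowed because each term $q(a)\log(1/p(a))$ is nonnegative. This leaves the lower bound $q(a^{*})\log(1/p(a^{*}))-H(q)$.

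The remaining issue is the entropy term. Bounding it by $\log|\mathcal{A}|$ would give the wrong constant, so I would split the entropy at $a^{*}$ instead. By the grouping rule,
\[
H(q)=h_2\bigl(q(a^{*})\bigr)+\bigl(1-q(a^{*})\bigr)H(q_{\mathrm{rest}}),
\]
where $h_2$ is the binary entropy and $q_{\mathrm{rest}}$ is $q$ conditioned on $a\neq a^{*}$. Since $h_2\le\log 2$, the binary part is handled. To dispose of $(1-q(a^{*}))H(q_{\mathrm{rest}})$, I would keep the non-$a^{*}$ cross-entropy terms rather than dropping them. Those terms sum to
\[
\bigl(1-q(a^{*})\bigr)\Bigl[\mathrm{CE}\bigl(q_{\mathrm{rest}},p_{\mathrm{rest}}\bigr)+\log\tfrac{1}{1-p(a^{*})}\Bigr],
\]
and Gibbs' inequality gives $\mathrm{CE}(q_{\mathrm{rest}},p_{\mathrm{rest}})\ge H(q_{\mathrm{rest}})$. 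The $H(q_{\mathrm{rest}})$ pieces cancel. The leftover $\log\frac{1}{1-p(a^{*})}$ term is nonnegative. That yields $\KL\ge q(a^{*})\log(1/p(a^{*}))-h_2(q(a^{*}))\ge q(a^{*})\log(1/p(a^{*}))-\log 2$.

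Getting this bookkeeping right is the main subtlety. It is essentially the chain rule for KL, $\KL(q\|p)\ge \KL(\mathrm{Bern}(q(a^{*}))\,\|\,\mathrm{Bern}(p(a^{*})))$, which could be cited directly as data processing. Then expand the binary KL and drop the nonnegative term $(1-q(a^{*}))\log\frac{1}{1-p(a^{*})}$.

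\textbf{Part (ii).} The surrogate is $\mathbb{E}_{a\sim p}[w(a)\log\pi_\theta(a)]$, with $w$ treated as fixed. Use the softmax identity $\partial\log\pi(a)/\partial z_{a^{*}}=\mathbb{1}[a=a^{*}]-\pi(a^{*})$ and evaluate at $\pi=p$. This gives
\[
g(a^{*})=p(a^{*})w(a^{*})-p(a^{*})\sum_a p(a)w(a)=p(a^{*})\bigl(w(a^{*})-\mathbb{E}_p[w]\bigr).
\]

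\textbf{Part (iii).} Now sample from $q$ and weight by the clipped advantage $\bar w(a)=\mathrm{clip}(\log q(a)-\log p(a),-\delta,\delta)$. The same identity gives $g(a^{*})=q(a^{*})\bar w(a^{*})-p(a^{*})\,\mathbb{E}_q[\bar w]$. The hypothesis $q(a^{*})\ge e^{\delta}p(a^{*})$ forces $\bar w(a^{*})=\delta$. Since $\bar w\le\delta$ everywhere, $\mathbb{E}_q[\bar w]\le\delta$, and therefore $g(a^{*})\ge\delta\bigl(q(a^{*})-p(a^{*})\bigr)$.

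Strict positivity uses $q(a^{*})\ge e^{\delta}p(a^{*})>p(a^{*})$, together with $\delta>0$ and $p(a^{*})>0$. If $p(a^{*})=0$, the bound still gives $\delta\, q(a^{*})>0$ whenever $q(a^{*})>0$.
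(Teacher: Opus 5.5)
Your proposal is correct and follows the paper's proof: parts (ii) and (iii) are the same score-function computation $\sum_b r(b)\,w(b)\bigl(\1[a^{*}=b]-p(a^{*})\bigr)$ with $r=p$ and $r=q$ respectively, and your grouping-rule and Gibbs bookkeeping in (i) is a hand proof of the data-processing reduction to the binary KL that the paper invokes, followed by the same two steps of dropping $(1-q(a^{*}))\log\frac{1}{1-p(a^{*})}$ and bounding the binary entropy by $\log 2$. The one thing to tighten is that $\mathcal{L}^{\mathrm{rec}}_{t,k}$ in \Cref{eq:recovery} is a KL over full responses, so the paper applies data processing directly to $y\mapsto\1[\act(y)=a^{*}]$, whereas your bound is on the action-level $\KL(q\,\|\,p)$ and reaches the loss as defined only after one more data-processing step through $y\mapsto\act(y)$.
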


When the student assigns little probability to the recovery action, its divergence from the self-teacher in (i) is large, yet the student-sampled updates in (ii) become too small to reduce it. Recovery distillation instead provides the update in (iii), whose size depends on the self-teacher rather than the student, as we confirm empirically in \Cref{sec:discussion} (\Cref{fig:recovery_kl}).

\begin{table}[t]
\centering
\fontsize{8.5}{9.5}\selectfont
\setlength{\tabcolsep}{3.5pt}
\renewcommand{\arraystretch}{1.15}
{%
\begin{tabular}{l@{\hspace{7pt}} ccccccc cccccccc cc}
\toprule
\multirow{2}{*}{\textbf{Method}}
& \multicolumn{7}{c}{\textbf{ALFWorld}}
& \multicolumn{8}{c}{\textbf{Search-based QA}}
& \multicolumn{2}{c}{\textbf{WebShop}} \\
\cmidrule(lr){2-8} \cmidrule(lr){9-16} \cmidrule(lr){17-18}
& Pick & Look & Clean & Heat & Cool & Pick2 & \cellcolor{gray!15}\textbf{Avg.}
& NQ & Triv & Pop & Hotp & 2Wk & MuS & Bam & \cellcolor{gray!15}\textbf{Avg.}
& Score & \cellcolor{gray!15}\textbf{Succ.} \\
\midrule

\rowcolor{gray!15}\multicolumn{18}{l}{\textit{Qwen3-1.7B}} \\
% \textbf{Qwen3-1.7B}
Base Model
 & 6.8 & 60.2 & 0.0 & 0.0 & 3.6 & 4.1 & 12.4 & 16.8 & 50.8 & 37.9 & 26.9 & 21.0 & 10.0 & 15.6 & 25.6 & 45.0 & 4.6 \\
OPSD
 & 23.7 & 31.2 & 10.9 & 0.0 & 2.2 & 6.5 & 12.4 & 43.4 & 57.9 & 48.2 & 33.3 & 34.0 & 10.0 & 30.5 & 36.8 & 49.2 & 10.1 \\
GRPO
 & 74.0 & 44.1 & 33.3 & 41.9 & 30.4 & 32.5 & 42.7 & 42.4 & 58.3 & 50.2 & 38.2 & 38.5 & 9.1 & 27.4 & 37.7 & 74.5 & 57.0 \\
Skill-GRPO
 & \cellcolor{cellbest}\textbf{89.8} & 62.4 & 73.0 & 50.4 & \cellcolor{cellsecond}\underline{68.8} & 43.9 & 64.7 & 43.0 & 57.3 & 44.3 & 26.2 & 34.3 & 10.0 & 20.2 & 33.6 & 69.0 & 53.9 \\
AgentOPSD
 & 72.3 & 35.5 & 59.8 & 56.4 & 58.7 & 30.1 & 52.1 & \cellcolor{cellsecond}\underline{45.6} & 57.0 & 41.7 & 33.7 & 32.7 & 8.7 & 24.0 & 34.8 & 69.0 & 46.4 \\
Skill-SD
 & 79.1 & 58.1 & 54.0 & 57.3 & 50.0 & 56.1 & 59.1 & 39.2 & 53.1 & 47.2 & 32.4 & 34.0 & 12.0 & 28.3 & 35.2 & 77.2 & 58.6 \\
PivotRL
 & 45.8 & \cellcolor{cellsecond}\underline{75.3} & 26.4 & 15.4 & 18.8 & 19.5 & 33.5 & 35.6 & 45.3 & \cellcolor{cellbest}\textbf{63.8} & 31.4 & 29.8 & \cellcolor{cellbest}\textbf{21.4} & 21.5 & 35.5 & 66.6 & 28.8 \\
TurnOPD
 & 63.3 & \cellcolor{cellsecond}\underline{75.3} & 57.5 & 50.4 & 30.4 & 41.5 & 53.1 & 22.0 & 43.0 & 43.0 & \cellcolor{cellbest}\textbf{40.5} & \cellcolor{cellsecond}\underline{47.2} & 16.5 & 34.9 & 35.3 & 70.6 & 55.5 \\
StepOPSD
 & 72.3 & 60.2 & 73.0 & 21.4 & 42.0 & 48.0 & 52.8 & 42.1 & \cellcolor{cellsecond}\underline{59.5} & 48.5 & \cellcolor{cellbest}\textbf{40.5} & 35.0 & 9.1 & 28.0 & 37.5 & 72.8 & 60.9 \\
TCOD
 & 66.7 & \cellcolor{cellsecond}\underline{75.3} & 60.9 & 36.8 & 22.5 & 49.6 & 51.9 & 24.9 & 44.7 & 47.2 & 35.0 & 46.0 & \cellcolor{cellsecond}\underline{17.8} & 29.0 & 34.9 & 77.2 & 46.1 \\
SOD
 & 70.1 & \cellcolor{cellbest}\textbf{83.9} & 55.2 & 42.7 & 39.9 & \cellcolor{cellbest}\textbf{65.0} & 59.5 & 24.9 & 52.4 & 51.5 & \cellcolor{cellsecond}\underline{39.8} & \cellcolor{cellsecond}\underline{47.2} & 15.5 & \cellcolor{cellsecond}\underline{36.1} & 38.2 & 66.6 & 54.7 \\
RLSD
 & 72.3 & 67.7 & 73.0 & 50.4 & 61.6 & 39.0 & 60.7 & 42.1 & 57.3 & 51.5 & 38.2 & 40.1 & 12.0 & 29.0 & \cellcolor{cellsecond}\underline{38.6} & \cellcolor{cellsecond}\underline{83.2} & 62.5 \\
SDAR
 & 83.1 & 55.9 & \cellcolor{cellsecond}\underline{92.5} & \cellcolor{cellsecond}\underline{71.8} & 58.0 & 48.0 & \cellcolor{cellsecond}\underline{68.2} & 42.4 & 57.3 & 46.3 & 36.9 & 42.4 & 8.1 & 26.5 & 37.1 & 64.6 & 57.0 \\
OPID
 & 83.1 & 57.0 & 65.5 & \cellcolor{cellsecond}\underline{71.8} & 46.4 & 43.9 & 61.3 & 44.0 & 57.9 & 49.2 & 35.9 & 37.2 & 11.0 & 27.1 & 37.5 & 76.6 & \cellcolor{cellsecond}\underline{68.0} \\
\textbf{\method{}}
 & \cellcolor{cellsecond}\underline{87.6} & 55.9 & \cellcolor{cellbest}\textbf{93.7} & \cellcolor{cellbest}\textbf{72.6} & \cellcolor{cellbest}\textbf{73.9} & \cellcolor{cellsecond}\underline{58.5} & \cellcolor{cellbest}\textbf{73.7} & \cellcolor{cellbest}\textbf{48.5} & \cellcolor{cellbest}\textbf{64.1} & \cellcolor{cellsecond}\underline{57.0} & 39.2 & \cellcolor{cellbest}\textbf{49.2} & 15.9 & \cellcolor{cellbest}\textbf{37.4} & \cellcolor{cellbest}\textbf{44.5} & \cellcolor{cellbest}\textbf{84.4} & \cellcolor{cellbest}\textbf{76.6} \\
\midrule
\rowcolor{gray!15}\multicolumn{18}{l}{\textit{Qwen3-8B}} \\
% \textbf{Qwen3-8B}
Base Model
 & 31.1 & 39.8 & 23.0 & 0.0 & 8.0 & 26.0 & 21.3 & 13.6 & 37.5 & 13.6 & 17.8 & 25.2 & 3.9 & 17.4 & 18.4 & 27.5 & 7.0 \\
OPSD
 & 40.1 & 49.5 & 63.2 & 28.2 & 46.4 & 52.8 & 46.7 & 47.2 & 67.6 & \cellcolor{cellbest}\textbf{57.3} & 36.2 & 42.1 & 11.0 & 41.7 & 43.3 & 47.2 & 18.2 \\
GRPO
 & 44.6 & 59.1 & 11.5 & 21.4 & 31.2 & 43.9 & 35.3 & 35.0 & 63.1 & 44.3 & 32.0 & 41.1 & 9.1 & 37.7 & 37.5 & 83.5 & 75.0 \\
Skill-GRPO
 & 86.4 & 49.5 & 73.6 & 77.8 & 13.0 & 89.4 & 65.0 & 36.2 & 54.0 & 36.2 & 23.3 & 38.2 & 8.1 & 38.6 & 33.5 & 78.0 & 74.2 \\
AgentOPSD
 & 94.9 & 76.3 & 85.1 & 52.1 & 67.4 & 67.5 & 73.9 & 48.9 & 62.5 & 40.1 & 29.4 & 44.0 & 12.9 & 35.2 & 39.0 & 81.4 & 65.6 \\
Skill-SD
 & 89.8 & 91.4 & \cellcolor{cellbest}\textbf{100.0} & \cellcolor{cellsecond}\underline{88.9} & 76.8 & 84.6 & 88.6 & 43.0 & 65.7 & 43.0 & 32.0 & 46.0 & 11.0 & 40.2 & 40.1 & 81.7 & 72.7 \\
PivotRL
 & 85.3 & 74.2 & 42.0 & 29.9 & 30.4 & 61.8 & 53.9 & 28.5 & 52.4 & 48.2 & 36.9 & 39.5 & 17.2 & 43.9 & 38.1 & 65.9 & 53.0 \\
TurnOPD
 & 85.3 & 82.8 & 50.0 & 51.3 & 42.0 & 57.7 & 61.5 & 25.6 & 56.6 & 45.3 & 38.2 & 45.3 & 15.9 & 39.6 & 38.1 & 84.2 & 58.6 \\
StepOPSD
 & 93.2 & 82.8 & \cellcolor{cellsecond}\underline{94.8} & 55.6 & 60.1 & 89.4 & 79.3 & 42.1 & 67.6 & \cellcolor{cellsecond}\underline{55.3} & 33.3 & 46.0 & 11.0 & 44.2 & 42.8 & 80.7 & 72.7 \\
TCOD
 & 85.3 & 82.8 & 54.0 & 37.6 & 42.0 & 53.7 & 59.2 & 18.4 & 63.8 & 45.3 & \cellcolor{cellbest}\textbf{41.1} & 48.2 & \cellcolor{cellsecond}\underline{18.4} & \cellcolor{cellsecond}\underline{46.7} & 40.3 & 79.0 & 55.5 \\
SOD
 & 78.0 & \cellcolor{cellsecond}\underline{93.5} & 57.5 & 44.4 & 61.6 & 61.8 & 66.1 & 22.7 & 63.8 & 42.4 & 36.9 & 46.6 & \cellcolor{cellsecond}\underline{18.4} & 38.3 & 38.4 & 83.9 & 43.0 \\
RLSD
 & \cellcolor{cellbest}\textbf{100.0} & \cellcolor{cellbest}\textbf{100.0} & 84.5 & \cellcolor{cellbest}\textbf{94.0} & 76.8 & 89.4 & \cellcolor{cellsecond}\underline{90.8} & 36.2 & 58.3 & 42.1 & 23.0 & 34.0 & 7.1 & 38.3 & 34.1 & 84.6 & 78.9 \\
SDAR
 & \cellcolor{cellsecond}\underline{96.6} & 66.7 & 84.5 & 71.8 & \cellcolor{cellbest}\textbf{83.3} & \cellcolor{cellbest}\textbf{100.0} & 83.8 & 45.0 & \cellcolor{cellsecond}\underline{68.0} & 44.3 & 34.0 & \cellcolor{cellsecond}\underline{50.5} & 11.0 & 39.3 & 41.7 & \cellcolor{cellsecond}\underline{85.5} & \cellcolor{cellsecond}\underline{79.7} \\
OPID
 & \cellcolor{cellbest}\textbf{100.0} & 66.7 & \cellcolor{cellsecond}\underline{94.8} & 71.8 & \cellcolor{cellsecond}\underline{79.7} & 89.4 & 83.7 & \cellcolor{cellbest}\textbf{52.4} & 65.4 & 50.2 & \cellcolor{cellsecond}\underline{40.1} & \cellcolor{cellbest}\textbf{51.1} & 15.9 & 44.2 & \cellcolor{cellsecond}\underline{45.6} & 81.1 & 73.4 \\
\textbf{\method{}}
 & \cellcolor{cellbest}\textbf{100.0} & \cellcolor{cellbest}\textbf{100.0} & \cellcolor{cellbest}\textbf{100.0} & 85.5 & 76.8 & \cellcolor{cellsecond}\underline{95.9} & \cellcolor{cellbest}\textbf{93.0} & \cellcolor{cellsecond}\underline{50.2} & \cellcolor{cellbest}\textbf{72.5} & 52.1 & 39.2 & 48.9 & \cellcolor{cellbest}\textbf{19.1} & \cellcolor{cellbest}\textbf{49.8} & \cellcolor{cellbest}\textbf{47.4} & \cellcolor{cellbest}\textbf{88.2} & \cellcolor{cellbest}\textbf{81.9} \\
\bottomrule
\end{tabular}}
\vspace{5pt}
\caption{\small \textbf{Results on ALFWorld, Search-based QA, and WebShop with Qwen3-1.7B and Qwen3-8B students.} Results are averaged over three seeds, Avg. columns are unweighted means over task types or datasets, and every method with a teacher uses Qwen3-30B-A3B (1.7B) or Qwen3.5-122B-A10B (8B). QA columns abbreviate Natural Questions, TriviaQA, PopQA, HotpotQA, 2WikiMultiHopQA, MuSiQue, and Bamboogle. We highlight the \colorbox{cellbest}{\textbf{best}} and \colorbox{cellsecond}{\underline{second-best}} results.}
\label{tab:main_results}
\end{table}

\section{Experiments}
\label{sec:experiments}

\subsection{Experimental Setup}
\label{sec:setup}

\textbf{Benchmarks.} We evaluate \method{} on four multi-turn agentic benchmarks. \textbf{(1) ALFWorld} \citep{shridhar2020alfworld} is an embodied household environment with six task types, where the agent completes language-specified goals through textual actions. \textbf{(2) WebShop} \citep{yao2022webshop} is a simulated e-commerce website where the agent searches for and purchases a product that satisfies an instruction. \textbf{(3) Search-based QA} \citep{jin2025searchr1} requires the agent to answer questions from seven open-domain QA datasets \citep{kwiatkowski2019nq,joshi2017triviaqa,mallen2023popqa,yang2018hotpotqa,ho20202wiki,trivedi2022musique,press2023bamboogle} by calling a search engine. \textbf{(4) SWE-Bench Verified} \citep{jimenez2023swebench} requires the agent to resolve real GitHub issues by exploring and editing a Python repository, and we use it to test transfer to software engineering and to a different model family (\Cref{sec:main_results}). \Cref{app:benchmark_examples} shows an example episode from each of the first three benchmarks.

\textbf{Models \& training configurations.} We use Qwen3-1.7B and Qwen3-8B \citep{yang2025qwen3} as students, paired with a Qwen3-30B-A3B teacher and a Qwen3.5-122B-A10B teacher, respectively. Every method that uses a teacher, including the baselines, uses the same teacher for a given student. In the self-distillation setting, the student serves as its own teacher. On the first three benchmarks, all methods share the same training data, $160$ training steps, and $8$ rollouts per task (\Cref{app:hyperparameters,app:prompts}).

\looseness-1 \textbf{Baselines.} Besides the base model, we compare \method{} against three groups of methods, namely \textbf{(1)} RL and self-distillation, with GRPO \citep{shao2024deepseekmath}, OPSD \citep{zhao2026opsd}, RLSD \citep{yang2026rlsd}, and SDAR \citep{lu2026sdar}; \textbf{(2)} turn-level distillation for multi-turn agents, with TurnOPD \citep{zhou2026turnopd}, TCOD \citep{wang2026tcod}, SOD \citep{zhong2026sod}, StepOPSD \citep{zhang2026stepopsd}, and AgentOPSD \citep{wang2026agentopsd}; and \textbf{(3)} high-level guidance through skills or pivotal turns, with Skill-GRPO and OPID \citep{yang2026opid}, Skill-SD \citep{wang2026skillsd}, and PivotRL \citep{yi2026pivotrl} (\Cref{app:baseline_details}).

\textbf{Evaluation.} We report the task success rate on each ALFWorld task type, the exact-match accuracy on each QA dataset, and both the normalized score and the success rate on WebShop. The ALFWorld and Search-based QA averages are unweighted means over task types and datasets, respectively. Each experiment is run with three random seeds, and we report the average. We select each checkpoint on a separate validation set and evaluate it on held-out test sets of $274$ ALFWorld tasks, $500$ WebShop instructions, and $725$ QA questions, sampling at temperature $0.4$ (\Cref{app:eval_details}).

\subsection{Main Results}
\label{sec:main_results}
\textbf{\method{} attains the best average performance on all three main benchmarks, especially where successful rollouts are scarce.} As \Cref{tab:main_results} shows, \method{} ranks first on all eight per-benchmark averages. With the 1.7B student, it improves over the strongest baseline by $+5.5\%$ on ALFWorld and $+5.9\%$ on Search-based QA. With the 8B student, the margins are smaller but remain at least $+1.8\%$. The comparison with GRPO, which learns from outcome rewards alone, shows that \method{} helps most where successful rollouts are scarce. With the 1.7B student, it improves over GRPO the most on Clean, Cool, and Heat, which are the three ALFWorld task types that the base model solves least often. When every rollout of a task fails, group-relative advantages provide no learning signal (\Cref{prop:starvation} in \Cref{app:theory}), whereas recovery distillation still trains the recovery action at the states where the student gets stuck (\Cref{sec:theory}).

\begin{figure}[t]
\centering
\includegraphics[width=\linewidth]{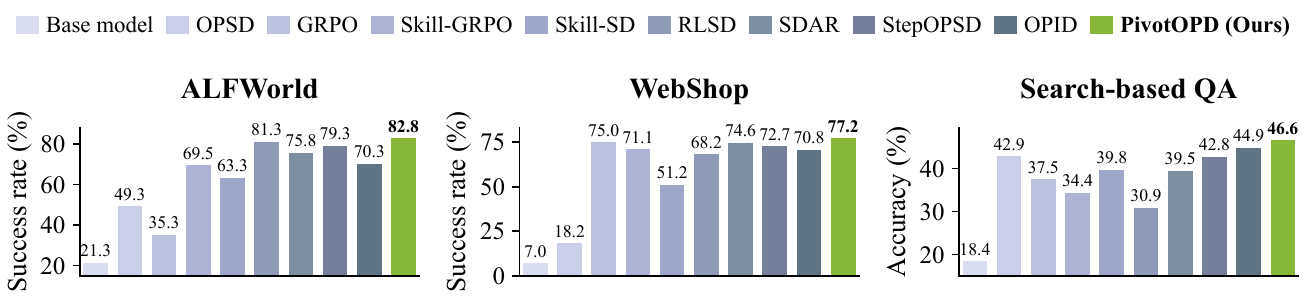}
\caption{\small \textbf{Self-distillation results on Qwen3-8B}, where the student serves as its own teacher. \method{} is best on all three benchmarks, outperforming the strongest baseline by $+3.9\%$ on average.}
\label{fig:self_distillation}
\end{figure}

\looseness-1 \textbf{\method{} turns partial progress into successful task completion.}
On WebShop, the normalized score gives partial credit for partially satisfying the instruction, whereas the success rate counts only fully completed tasks. With the 1.7B student, RLSD attains the highest score among the baselines. \method{} improves over RLSD by only $+1.2\%$ in score but by $+14.1\%$ in success rate. With both students, its gains over GRPO are likewise larger in success rate than in score. The larger gains in success rate are consistent with the effect of recovery distillation, which teaches the student to recover from pivotal mistakes that would otherwise leave a partially completed task unfinished. Indeed, removing recovery distillation ($K = 0$) substantially lowers the best WebShop validation score of the 1.7B student (\Cref{fig:ablation_k}).

\textbf{\method{} remains effective without a stronger external teacher.}
In its main setting, \method{} uses a stronger teacher to name the gold and recovery actions. To test whether it still works without such a teacher, we use the student as its own teacher for \method{} and for every baseline that uses a teacher. As shown in \Cref{fig:self_distillation}, \method{} achieves the best performance on all three benchmarks, outperforming the strongest baseline on each by at least $+1.5\%$. Relative to training under the stronger teacher (\Cref{tab:main_results}), it stays within $5\%$ on Search-based QA and in WebShop success rate but falls $10.2\%$ short on ALFWorld. These results suggest that much of the gain comes from where and how the teacher intervenes rather than from teacher capacity alone.

\begin{wrapfigure}{r}{0.22\textwidth}
\vspace{-14pt}
\centering
\includegraphics[width=0.22\textwidth]{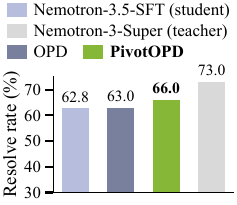}
\vspace{-12pt}
\captionsetup{justification=raggedright}
\caption{\textbf{SWE-Bench Verified results.} \method{} improves more than OPD.}
\label{fig:swebench}
\vspace{-7pt}
\end{wrapfigure}

\textbf{\method{} also improves a different model family on software engineering.}
The experiments so far all use Qwen3 students, so we next ask whether the gains of \method{} carry over to another model family and to software engineering. To this end, we train Nemotron-3.5-SFT with Nemotron-3-Super as the teacher, both from the Nemotron model family \citep{nvidia2025nemotron3}, and evaluate its resolve rate on SWE-Bench Verified (\Cref{app:swebench}). Since SWE-Bench episodes are long and containerized, we adapt pivot detection and the recovery budget to them (\Cref{app:swebench}). As shown in \Cref{fig:swebench}, \method{} raises the resolve rate by $+3.2\%$ and closes roughly a third of the gap to the teacher, whereas standard OPD improves it by only $+0.2\%$.

\section{Understanding Recovery from Pivotal Mistakes}
\label{sec:discussion}

We now study the recovery behavior behind these gains, asking \textbf{(1)} whether \method{} learns to recover from pivotal mistakes, \textbf{(2)} what makes this recovery learnable, \textbf{(3)} why recovery distillation provides a learning signal, and \textbf{(4)} how many recovery turns to train.

\begin{figure}[t]
\centering
\begin{subfigure}[b]{0.675\linewidth}
\centering
\includegraphics[width=\linewidth]{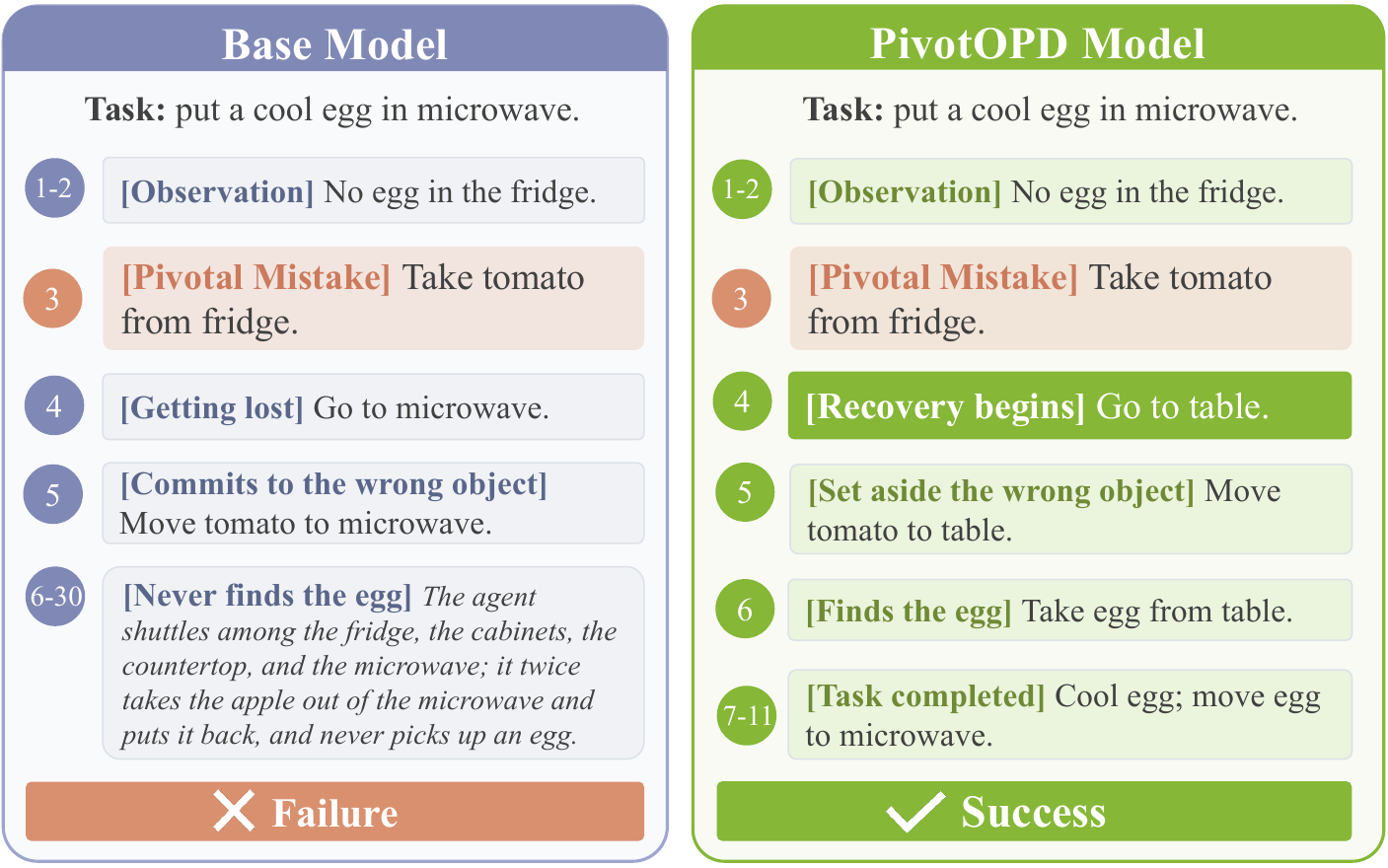}
\vspace{-10pt}
\caption{\small Case study: continuations after the same pivotal mistake.}
\label{fig:case_study_example}
\end{subfigure}
\hfill
\begin{subfigure}[b]{0.294\linewidth}
\centering
% trim order: left bottom right top (the export carries a 0.2in = 14.4pt pad on every side)
\includegraphics[width=\linewidth, trim={7pt 7pt 7pt 7pt}, clip]{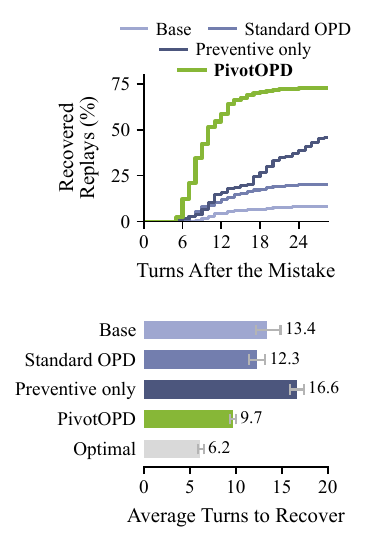}
\vspace{-10pt}
\caption{\small Recovery rate and efficiency.}
\label{fig:case_study_curves}
\end{subfigure}
\vspace{4pt}
\caption{\small \textbf{\method{} learns to recover from pivotal mistakes.} Each policy continues from a replayed prefix that ends with the pivotal mistake (\Cref{app:recovery_analysis}). \textbf{(a)} After the same pivotal mistake, the base model fails, whereas \method{} recovers and completes the task. \textbf{(b) Top:} the percentage of replays that recover within a given number of turns. \textbf{Bottom:} the average number of turns to recover among recovered replays, with the optimal number of turns in gray. \method{} recovers the most often and in the fewest turns.}
\label{fig:case_study}
\end{figure}

\textbf{\method{} learns to effectively recover from pivotal mistakes.}
We replay the $72$ ALFWorld pivotal mistakes of the motivating analysis (\Cref{sec:evidence}) and let each trained policy continue after the pivotal mistake (\Cref{app:recovery_analysis}). In the example of \Cref{fig:case_study_example}, after the student takes a tomato instead of the requested egg, the base model commits the tomato to the microwave and never finds the egg, whereas \method{} sets the tomato aside, takes the egg, and completes the task. Across all $72$ pivotal mistakes, \method{} recovers roughly nine times as often as the base model and in the fewest turns (\Cref{fig:case_study_curves}). Since the trained policy never sees the oracle, these recoveries also show that the mistakes are recoverable without its privileged information. It also raises the recovery rate by $+26.9\%$ over the preventive-only variant, i.e., \method{} with recovery budget $K = 0$, so explicit recovery distillation improves recovery beyond what preventive training alone achieves.

\begin{wraptable}{r}{0.35\textwidth}
\vspace{-6pt}
\centering
\small
\setlength{\tabcolsep}{3pt}
\renewcommand{\arraystretch}{1.15}
% Same 0.86 scale that the original seven-column table received from \resizebox.
\scalebox{0.87}{%
\begin{tabular}{l@{\hspace{3pt}} cccc}
\toprule
\textbf{Variant}
& {\footnotesize Clean} & {\footnotesize Heat} & {\footnotesize Cool} & {\footnotesize\textbf{Avg.}} \\
\midrule
Random pivotal turns
 & 76.2 & 61.1 & 50.0 & 62.6 \\
Hints w/o gold actions
 & 90.5 & 63.7 & \cellcolor{cellsecond}\underline{72.9} & 71.9 \\
Reverse-KL recovery
 & 85.7 & 50.0 & 61.5 & 64.5 \\
Recovery only
 & 66.7 & 40.9 & 53.9 & 64.7 \\
Preventive only
 & \cellcolor{cellsecond}\underline{93.2} & \cellcolor{cellsecond}\underline{70.2} & \cellcolor{cellsecond}\underline{72.9} & \cellcolor{cellsecond}\underline{72.5} \\
\textbf{\method{}}
 & \cellcolor{cellbest}\textbf{93.7} & \cellcolor{cellbest}\textbf{72.6} & \cellcolor{cellbest}\textbf{73.9} & \cellcolor{cellbest}\textbf{73.7} \\
\bottomrule
\end{tabular}}
\vspace{3pt}
\caption{\textbf{Component ablations on ALFWorld with the Qwen3-1.7B student.} Each variant changes one component of \method{} (preventive only also raises $w_\text{prev}$). The average is over all six task types (\Cref{tab:ablation_full}).}
\label{tab:ablation_components}
\vspace{-9pt}
\end{wraptable}

\looseness-1 \textbf{Recovery requires supervision at the right turns and on the right actions.}
To see what makes this recovery learnable, we train variants of \method{} on ALFWorld that each remove or replace a single component (\Cref{tab:ablation_components} and \Cref{app:additional_results}). Neither the preventive-only nor the recovery-only variant matches the average success rate of \method{}, so the two distillation terms are complementary. This holds even though the preventive-only variant compensates with a larger preventive weight (\Cref{app:additional_results}). The budget ablation shows a larger effect of recovery: the selected budget raises the best validation score over $K = 0$ by $5.4$ points on ALFWorld, $21.5$ on WebShop, and $2.4$ on Search-based QA (\Cref{fig:ablation_k}). When we inject the same hints at random turns, the average success rate is the lowest among all variants, indicating that supervision must land at pivotal turns. Similarly, when a generic reflection prompt replaces the gold action in each hint, the average success rate stays close to that of preventive only but drops sharply on Pick2 (\Cref{tab:ablation_full}), suggesting that the hint must name the correct action on task types whose decisions cannot be inferred from the observations alone.

\begin{figure}[t]
\centering
\includegraphics[width=\linewidth]{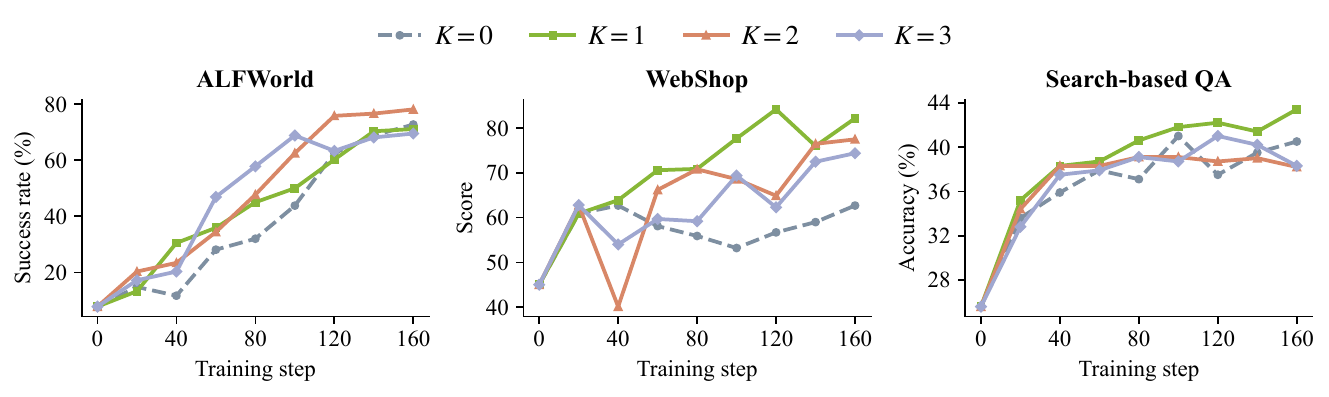}
\caption{\textbf{Ablation on the recovery budget with the Qwen3-1.7B student.} Each panel reports the validation performance trend with different numbers of recovery turns $K \in \{0, 1, 2, 3\}$ during training, where $K = 0$ is the preventive-only variant. Overall, $K=1$ yields the best performance for WebShop and Search-based QA, while ALFWorld needs $K=2$ recovery turns.}
\label{fig:ablation_k}
\end{figure}

\begin{wrapfigure}{r}{0.32\textwidth}
\vspace{-6pt}
\centering
\includegraphics[width=0.32\textwidth, trim={1pt 0pt 2pt 0pt}, clip]{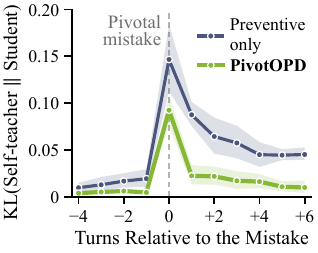}
\vspace{-7pt}
\caption{Recovery distillation quickly brings the KL divergence from the self-teacher back down after the pivotal mistake.}
\label{fig:recovery_kl}
\vspace{-4pt}
\end{wrapfigure}

\looseness-1 \textbf{Recovery distillation restores the learning signal that on-policy updates lose after a pivotal mistake.}
Beyond where and what to supervise, we now ask why recovery distillation provides a learning signal when the student rarely samples the recovery action. \Cref{prop:rescue} predicts that without recovery distillation, the student stays far from its privileged self-teacher after a pivotal mistake. In \Cref{fig:recovery_kl}, we track the forward KL divergence from the self-teacher to the student, which measures how far the student remains from its self-teacher (\Cref{app:recovery_kl}). After the mistake, the preventive-only student indeed remains at least twice as far from its self-teacher as \method{} at every displayed turn. This gap persists well beyond the single recovery turn that \method{} trains in this run ($K = 1$), suggesting that the learned recovery carries over to later turns rather than fitting the trained turn alone.

\textbf{The best recovery budget varies across benchmarks.}
It is $K = 1$ on WebShop and Search-based QA but $K = 2$ on ALFWorld (\Cref{fig:ablation_k}). One plausible explanation, consistent with \Cref{prop:depth} in \Cref{app:theory}, is that ALFWorld tasks require longer sequences of fine-grained actions after a mistake. \Cref{app:compute} reports the training compute overhead of each budget.

% \section{Related Works}
\section{Related Work}
\label{sec:related}
Our baselines span RL and on-policy self-distillation, alone or combined \citep{shao2024deepseekmath,zhao2026opsd,yang2026rlsd,lu2026sdar, he2026sdzero}, turn-level distillation for multi-turn agents \citep{zhou2026turnopd,wang2026tcod,zhong2026sod,zhang2026stepopsd,wang2026agentopsd}, and guidance from skills or pivotal turns \citep{wang2026skillsd,yang2026opid,yi2026pivotrl}, all of which learn from responses that the student samples. Several locate outcome-deciding turns, and OPID's turn-level skills are closest to our preventive distillation. \method{} additionally trains the student on self-teacher responses at the states that its own mistakes create, so it teaches recovery actions that the student rarely samples (\Cref{app:related_work}).

\section{Conclusion}
\label{sec:conclusion}
We showed on ALFWorld that failed rollouts often contain a recoverable pivotal mistake that standard OPD does not repair, and proposed \method{}, which distills a privileged self-teacher at and after pivotal turns. \method{} attains the best average performance against $13$ baselines on three agentic benchmarks and recovers from pivotal mistakes far more often than standard OPD. It also improves a different model family on software engineering.

\newpage
\appendix
\onecolumn

\clearpage

\renewcommand{\thesection}{\Alph{section}}
\renewcommand\thefigure{S.\arabic{figure}}
\setcounter{figure}{0}
\renewcommand\thetable{S.\arabic{table}}
\setcounter{table}{0}
\crefalias{section}{appendix}
\crefalias{subsection}{appendix}

\raggedbottom
\section{Motivating Analysis Details}
\label{app:analysis}

This appendix gives the protocol behind \Cref{fig:motivation} and \Cref{sec:evidence}.

\paragraph{Rollouts.} We roll out three models, Qwen3-8B, Qwen3-30B-A3B, and Qwen3-235B-A22B, on the $140$ held-out ALFWorld tasks of the \texttt{valid\_seen} split, sampling at temperature $0.4$ with a cap of $30$ turns and a history window of $5$ turns. The analysis runs on ALFWorld because it is the one benchmark of the three that provides a symbolic oracle, and the oracle is what makes pivotal turns identifiable without a model in the loop. \method{} does not use this oracle during training (\Cref{sec:pivotal_analysis}).

\paragraph{Oracle-labeled pivotal turns.} Every recorded trajectory is replayed in the environment alongside the ALFWorld PDDL oracle, which recomputes the number of turns that an optimal trajectory still needs, i.e., $L(s_t)$ in \Cref{sec:formulation}. A turn is pivotal when the action committed there raises this number or leaves the task unsolvable. Among the failed trajectories, $72$ of $111$ contain a pivotal turn for Qwen3-8B, $35$ of $70$ for Qwen3-30B-A3B, and $48$ of $81$ for Qwen3-235B-A22B, which amounts to $155$ of $262$ across the three models. The analysis uses the first pivotal turn of each failed trajectory, with the oracle action at that turn as the correction. Before this turn, the agent has not moved away from completing the task, so correcting it gives a clean counterfactual. The replay reproduces the recorded outcome for all $420$ trajectories, so the counterfactuals below are exact rather than approximate.

\paragraph{Counterfactual replay.} Qwen3-8B fails $111$ of the held-out tasks, and $72$ of these failed trajectories contain a pivotal turn. From the pivotal turn of each, we replay the remainder of the trajectory with the same model under the five interventions of \Cref{fig:motivation}b: no intervention, the model resampled at the pivotal turn, the oracle action forced at another turn drawn at random from the turns that are not pivotal, the oracle action forced at the pivotal turn itself, and the oracle action forced at each of the two turns after the mistake, with the mistake left in place and the oracle re-queried at every forced turn. Each intervention is repeated four to eight times per trajectory. The later-turn intervention is a control for the position of the correction, and it controls for position only when the drawn turn lands after the pivotal turn, since an earlier correction also erases the model's own mistake. We therefore report it over the $51$ trajectories where the drawn turn does land later, while the other interventions use all $72$. This control forces the oracle action later in the trajectory and has fewer turns left to work with, but success after correcting the pivotal turn is flat in the position of that turn, between $57\%$ and $63\%$ across position tertiles. Extending the correction after the mistake from two turns to three adds about four points, so the two-turn result is not an artifact of how many turns we correct.

\paragraph{Confidence intervals.} For each intervention, we average the replayed success over the replays of each trajectory and compute a $95\%$ bootstrap confidence interval over trajectories ($4{,}000$ draws, seed $0$), which we give in brackets in percent. Without intervention, the replayed success is $8.2\%$ $[3.8, 13.0]$, and resampling the pivotal turn gives $6.9\%$ $[2.4, 12.2]$. Forcing the oracle action at a later turn gives $17.2\%$ $[8.3, 27.5]$. Correcting the pivotal turn instead gives $59.0\%$ $[48.3, 69.8]$, and forcing the oracle action at the two turns after the mistake gives $58.3\%$ $[47.6, 69.1]$, or $62.2\%$ $[51.4, 72.9]$ with three forced turns. The intervals of correcting the pivotal turn and of forcing the oracle action after the mistake therefore lie entirely above those of the other three interventions.

\paragraph{On-policy distillation run.} Qwen3-8B is trained from its base checkpoint for $120$ steps with standard on-policy distillation. Here, standard OPD is the OPSD baseline of \Cref{tab:main_results}, which distills a privileged self-teacher \citep{zhao2026opsd} under the configuration of \Cref{app:hyperparameters,app:baseline_details}. For \Cref{fig:motivation}c, we roll out each saved checkpoint on the same $140$ held-out tasks, label the rollouts with the oracle exactly as above, and split each checkpoint's failures into those that fail at a pivotal turn and those whose trajectory contains no pivotal turn and leaves the task unfinished. Among the $72$ tasks that the base model loses at a pivotal turn, roughly seven in ten are lost again after distillation, and about half of the $72$ repeat the same kind of mistake. We score the probability that the model assigns to the oracle action under the frozen policy at each pivotal turn, before and after distillation. Distillation moves the median probability of the committed mistake from $0.999$ to below $10^{-5}$ and raises the median probability of the oracle action by ten orders of magnitude. The raised probability nevertheless stays below $10^{-2}$ at every pivotal turn, so a group of $8$ rollouts is not expected to sample the oracle action even once. This run is shorter than the $160$-step runs of \Cref{sec:setup}, and its numbers are not comparable to \Cref{tab:main_results}, though both sides of every comparison within this analysis are measured under the same protocol.

\section{Method Details}
\label{app:method_details}

This appendix details the components of \method{} summarized in \Cref{sec:method}, and \Cref{alg:method} presents one training step.

\paragraph{Teacher report and hints.} For each trajectory, the teacher returns one structured reply containing a trajectory summary, a trajectory-level lesson, a per-turn lesson, and gold-action blocks that name an action for up to $m$ candidate turns. The trajectory-level lesson is the brief feedback on the whole trajectory that \Cref{sec:recovery} mentions. It is inserted into the prompt at every turn of the trajectory and scored with the distillation advantage $A^{\mathrm{distill}}_\ell$, weighted by $w_\text{traj}$ in place of $w_\text{prev}$. For brevity, \Cref{eq:objective} omits this term. Replies that name no action contribute only the lesson signal and never trigger recovery. The hint $h(a)$ renders the action as a short passage. The passage states that $a$ is a reasonable approach at this turn and instructs the model to reason toward it from scratch, in its own style, without referencing, acknowledging, or quoting the hint. The exact templates are in \Cref{app:prompts}.

\paragraph{Action resolution.} The teacher's free-text action is matched onto the admissible set by normalized exact equality, then by equality after stripping articles, then by containment by a unique admissible command, and finally by highest token overlap above a fixed threshold, with a strict winner. If none of the rules resolves a reply, the recovery is dropped. For the open action spaces of Search-based QA and SWE-Bench, no matching is required: the teacher's named action is taken verbatim, subject to the same schema-validity check as student actions, i.e., a well-formed search query or answer for Search-based QA, and for SWE-Bench, a call that parses against the scaffold's tool interface and executes in the task container.

\paragraph{Mismatch check.} A candidate turn is pivotal when the student's committed action differs from the gold action after both are lowercased and their whitespace is collapsed. In Search-based QA, both actions are first rewritten in the forms \texttt{search[$\cdot$]} and \texttt{answer[$\cdot$]}. A response without a parsable action counts as a mismatch. Free-text actions such as search queries rarely match verbatim, so candidate turns with such actions are usually treated as pivotal. This is one source of the false positives discussed in the next paragraph.

\paragraph{Relation to oracle-labeled pivotal turns.} Pivot detection relaxes the oracle labeling of \Cref{sec:evidence} in four ways. \textbf{(1)} The teacher's judgment replaces the oracle. Although ALFWorld provides an oracle, we do not use it during training, so that \method{} stays identical across benchmarks and does not depend on a benchmark-specific oracle. \textbf{(2)} A trajectory may contain several pivotal turns, because the agent can move away from completing the task more than once. \textbf{(3)} Pivot detection covers successful trajectories as well as failed ones, because a successful trajectory can still contain a pivotal mistake from which the student happens to recover. \textbf{(4)} Disagreement with the gold action replaces an increase in $L$, so an equally reasonable alternative action can be flagged as pivotal. Such false positives are inexpensive. The preventive weight $w_\text{prev}$ is small (\Cref{tab:method_hparams}), and recovery distillation at such a turn still trains the student toward the teacher's next action at a state that the student actually visited.

\paragraph{Teacher--oracle agreement.} We measure how often pivot detection finds the oracle-labeled pivotal turns on ALFWorld. To this end, we run each training teacher on ALFWorld rollouts of the student that it trains, i.e., Qwen3-30B-A3B on Qwen3-1.7B and Qwen3.5-122B-A10B on Qwen3-8B (\Cref{sec:setup}). As in \Cref{app:analysis}, we keep the failed trajectories that contain an oracle-labeled pivotal turn. Each teacher uses the same prompt, parsing, and decoding configuration as in training (\Cref{app:prompts}). It therefore receives the full trajectory and its outcome, may choose among all turns, and selects up to $m = 5$ candidate turns. We draw $5$ samples per trajectory and score only the teacher-detected pivotal turns, i.e., the candidate turns whose gold action disagrees with the student's committed action. A sample is correct when at least one of these turns lies within one turn of the first oracle-labeled pivotal turn, and a reply that cannot be parsed counts as incorrect. We average correctness over the samples of each trajectory and then over trajectories. The random baseline selects as many turns as the sample's teacher-detected pivotal turns, uniformly at random without replacement, and applies the same criterion. Its value therefore differs between the two teachers, which are evaluated on different trajectories and detect different numbers of pivotal turns. Both teachers agree with the oracle at least twice as often as the random baseline (\Cref{tab:teacher_oracle}). In \Cref{sec:pivotal_analysis}, we say that at least one teacher-detected pivotal turn falls within one turn of the oracle-labeled pivotal turn when a sample is correct in this sense, and we report the mean accuracy of the two teachers, $77.8\%$.

\begin{table}[h]
\centering
\small
\setlength{\tabcolsep}{8pt}
\begin{tabular}{llccc}
\toprule
\textbf{Teacher} & \textbf{Student} & \textbf{\begin{tabular}[b]{@{}c@{}}$\pm1$-turn\\accuracy (\%)\end{tabular}} & \textbf{\begin{tabular}[b]{@{}c@{}}Random\\baseline (\%)\end{tabular}} & \textbf{\begin{tabular}[b]{@{}c@{}}Gain over\\random (\%)\end{tabular}} \\
\midrule
Qwen3-30B-A3B & Qwen3-1.7B & $71.1$ & $33.6$ & $+37.5$ \\
Qwen3.5-122B-A10B & Qwen3-8B & $84.4$ & $28.1$ & $+56.3$ \\
\bottomrule
\end{tabular}
\vspace{4pt}
\caption{\textbf{Teacher--oracle agreement on ALFWorld.} Each teacher analyzes failed rollouts of the student that it trains. Accuracy is the percentage of teacher samples with at least one teacher-detected pivotal turn within one turn of the first oracle-labeled pivotal turn, averaged over failed trajectories with such a turn. The random baseline selects the same number of turns uniformly at random, and the last column subtracts it from the accuracy.}
\label{tab:teacher_oracle}
\end{table}

\paragraph{Leakage control.} A generated recovery response is discarded if it matches any of a family of leakage patterns covering acknowledgments of a hint, suggestion, or provided approach. Training sequences are rebuilt from the unprivileged observation, and an assertion rejects any training prompt containing a privileged marker. As a result, hinted text can never enter the training prompt.

\paragraph{Environment replay.} Before taking recovery turns, multi-turn recovery replays the recorded action prefix in a pooled copy of the environment and verifies that the reached observation matches the recorded one exactly. A mismatch aborts the recovery. For $k \ge 2$, the recovery context $\tilde{c}_{t+k}$ is reached by executing $\act(y_\text{rec})$ of the previous recovery turn in this replayed environment rather than recorded in the original trajectory. The teacher is then queried again at $\tilde{c}_{t+k}$ for the next recovery action $a^{*}_{t+k}$. Each recovery turn adds one training sequence with its own loss $\mathcal{L}^{\mathrm{rec}}_{t,k}$. Beyond this check, at most $64$ recoveries are processed per training step, which bounds the time that recovery rollouts add to each step.

\paragraph{Combined advantage and surrogate.} The distillation terms of \Cref{eq:objective} enter the PPO update as per-token advantages built from the distillation advantage $A^{\mathrm{distill}}_\ell$ of \Cref{eq:distill_adv},
\begin{equation}
A^{\mathrm{prev}}_{\ell} = w_\text{prev}\, A^{\mathrm{distill}}_\ell,
\qquad
A^{\mathrm{rec}}_{\ell} = w_\text{rec}\, \operatorname{clip}_{[-\delta,\, \delta]}\!\left( A^{\mathrm{distill}}_\ell \right),
\label{eq:advantage}
\end{equation}
where $\ell$ indexes the tokens of the scored sequence and the clip bound $\delta$ caps the distillation advantage on any single token. For prevention, the distillation advantage is evaluated at the pivotal context $c_t$ with the gold action $a^{*}_t$ and applied to the recorded response $y_t$. Positive values increase the weight of tokens preferred by the hinted view, and negative values decrease the weight of tokens that it disfavors, so the update shifts the student's response toward the gold action. We keep the preventive weight $w_\text{prev}$ small (\Cref{tab:method_hparams}), since recovery distillation provides the main signal after a pivotal mistake. For recovery, the distillation advantage is evaluated at the recovery context $\tilde{c}_{t+k}$ with the recovery action $a^{*}_{t+k}$ and applied to $y_\text{rec}$. Unlike \Cref{eq:hint}, the recovery loss of \Cref{eq:recovery} takes its expectation under the frozen self-teacher, and gradients again flow only through $\pi_\theta$. The hint enters only this frozen target, while the trained response is conditioned on the unprivileged context $\tilde{c}_{t+k}$. Relative to standard on-policy self-distillation \citep{zhao2026opsd}, the privileged view is the hint that names the gold action, and the preventive term applies only at the pivotal turns rather than on every response. On a recovery sequence, the distillation advantage is near zero on the text that the plain policy would emit regardless and largest on the tokens that encode the recovery action, so the update concentrates where the hint mattered. The advantages combine with the group-relative advantage as
\begin{equation}
A_{\ell} \;=\;
\begin{cases}
A^{\mathrm{rec}}_{\ell} & \text{on recovery sequences}, \\[2pt]
A^{\mathrm{RL}}_{\ell} + A^{\mathrm{prev}}_{\ell} & \text{otherwise, where } A^{\mathrm{prev}}_{\ell} = 0 \text{ off pivotal turns}.
\end{cases}
\label{eq:combined}
\end{equation}
In both cases, $\ell$ indexes the tokens of the sequence. Writing $\rho_\ell(\theta) = \pi_\theta(y_\ell \mid c, y_{<\ell}) \,/\, \pib(y_\ell \mid c, y_{<\ell})$ for the token-level ratio at the context $c$ of each sequence, one PPO update minimizes the clipped surrogate
\begin{equation}
\widehat{\mathcal{L}}(\theta) \;=\; -\, \frac{1}{\sum_{(c,\, y)} |y|} \sum_{(c,\, y)} \sum_{\ell=1}^{|y|} \min\Bigl( \rho_\ell(\theta)\, A_{\ell},\; \operatorname{clip}_{[1-\epsilon_\text{clip},\; 1+\epsilon_\text{clip}]}\bigl(\rho_\ell(\theta)\bigr)\, A_{\ell} \Bigr),
\label{eq:loss}
\end{equation}
where the outer sum runs over every rollout and recovery sequence of the training step, $\ell$ indexes the tokens of each sequence, and $\epsilon_\text{clip}$ is the clip ratio. These advantages implement the corresponding terms of \Cref{eq:objective}, exactly in expectation for prevention and as a clipped, mass-covering step for recovery (\Cref{prop:starvation,prop:rescue}). The surrogate also carries a low-variance KL penalty (\Cref{app:hyperparameters}).

\begin{algorithm}[h]
\caption{One training step of \method{}}
\label{alg:method}
\begin{algorithmic}[1]
\Require frozen policy $\pib$, teacher $M$, group size $G$, weights $w_\text{prev}, w_\text{traj}, w_\text{rec}$, clip $\delta$, recovery turns $K$
\State collect $G$ rollouts per task with $\pib$; compute group-relative advantages $A^{\mathrm{RL}}$
\For{each trajectory $\tau$}
    \State $\{(t_j, a^{*}_{t_j})\}_{j \le m} \gets M(\tau)$ \Comment{candidate turns, gold actions, and lessons}
    \For{each candidate turn $t$ with gold action $a^{*}_t$}
        \If{$a_t \neq a^{*}_t$} \Comment{pivotal turn, $t \in \mathcal{T}_\text{pivot}$}
            \State add $A^{\mathrm{prev}}$ on the tokens of $y_t$ \Comment{\Cref{eq:advantage}}
            \State $c \gets c_{t+1}$
            \For{$k = 1, \dots, K$}
                \State $a^{*} \gets$ resolve$\big(M(c),\, \mathcal{A}(c)\big)$; \textbf{break} if unresolved
                \State $y_\text{rec} \sim \pib\!\left(\cdot \mid c,\, h(a^{*})\right)$; \textbf{break} if no action or hint leaked
                \State append $(c, y_\text{rec})$ with advantage $A^{\mathrm{rec}}$ and all other signals zeroed \Comment{\Cref{eq:advantage}}
                \If{$k < K$} \State $c \gets$ context after executing $\act(y_\text{rec})$ in the replayed environment \EndIf
            \EndFor
        \EndIf
    \EndFor
\EndFor
\State update $\theta$ on all sequences with the PPO clipped surrogate \Comment{\Cref{eq:loss}}
\end{algorithmic}
\end{algorithm}

\section{Analysis of Recovery Distillation}
\label{app:theory}

This appendix proves \Cref{prop:rescue} of \Cref{sec:theory}, states and proves two further propositions, and records a surrogate characterization of the recovery update. Parts (ii) and (iii) of \Cref{prop:rescue} follow the standard contrast between reverse and forward KL \citep{agarwal2024gkd,gu2024minillm}, and what is specific to \method{} is that recovery distillation applies the forward update at the states that the student's own mistakes create. We analyze the committed action at a single recovery turn as one categorical decision, which is the standard abstraction for token-level updates. Every statement holds verbatim per token, with the conditional next-token distributions in place of the action marginals and summed over the prefixes that the hinted policy visits. Fix a recovery turn with context $c = \tilde{c}_{t+k}$ and recovery action $a^{*} = a^{*}_{t+k}$, and write $p = \pib(\act(y) = \cdot \mid c)$ and $q = \pib(\act(y) = \cdot \mid c,\, h(a^{*}))$ over $\mathcal{A}(c)$. Write $u$ for the logits, so that $p_\theta = \softmax(u)$ has full support. At the start of an update, the PPO ratio equals one, so the surrogate gradient reduces to the advantage-weighted score function. All gradients are therefore evaluated at $p_\theta = p$.

The first result shows why preventive distillation and group-based RL provide little learning signal on the recovery action.

\begin{proposition}[Learning signal vanishes after a mistake]
\label{prop:starvation}
When the action-level distillation advantage $\log q(a) - \log p(a)$ is used as the per-action training signal on actions sampled from $p$, as in preventive distillation, the expected update is $-\nabla_u \KL(p_\theta \,\|\, q)$. The component of this update on an action $a$ is $p(a)\bigl(\log\tfrac{q(a)}{p(a)} + \KL(p \,\|\, q)\bigr)$, which vanishes on the recovery action as $p(a^{*}) \to 0$ at any fixed $q$. In a group of $G$ plain rollouts, the recovery action appears with probability $1 - (1 - p(a^{*}))^{G} \le G\, p(a^{*})$, and it first appears after $1 / p(a^{*})$ plain samples in expectation, against $1 / q(a^{*})$ under the hinted policy. Moreover, if every rollout of the task receives the same return, the group-relative advantage is identically zero.
\end{proposition}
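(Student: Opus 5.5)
The plan is to work entirely in the categorical abstraction fixed above: logits $u$, $p_\theta = \softmax(u)$, all gradients evaluated at $p_\theta = p$. The key tool is the softmax score function $\nabla_u \log p_\theta(a) = e_a - p$, so that $\mathbb{E}_{a\sim p}[w(a)(e_a - p)]$ has component $p(b)\bigl(w(b) - \mathbb{E}_p[w]\bigr)$ on each action $b$.

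\textbf{Step 1 (the update is the reverse-KL gradient).} Substitute $w(a) = \log q(a) - \log p(a)$ into the identity above. Then $\mathbb{E}_p[w] = -\KL(p\,\|\,q)$, so the component on $a$ is $p(a)\bigl(\log\tfrac{q(a)}{p(a)} + \KL(p\,\|\,q)\bigr)$. To match this with $-\nabla_u \KL(p_\theta\,\|\,q)$, differentiate directly:
\[
\KL(p_\theta\,\|\,q) = \sum_b p_\theta(b)\bigl(\log p_\theta(b) - \log q(b)\bigr),
\]
using $\partial p_\theta(b)/\partial u_a = p_\theta(b)(\mathbb{1}[a=b] - p_\theta(a))$. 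The $\sum_b \partial p_\theta(b)$ term vanishes because $\sum_b p_\theta(b) = 1$. This leaves $-p(a)\bigl(\log\tfrac{p(a)}{q(a)} - \KL(p\,\|\,q)\bigr)$ for the negative gradient, which agrees with the expression above.

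\textbf{Step 2 (vanishing as $p(a^{*})\to 0$).} Fix $q$ with $q(a^{*}) > 0$. Bound
\[
\bigl|p(a^{*})\log\tfrac{q(a^{*})}{p(a^{*})}\bigr| \le p(a^{*})\,|\log q(a^{*})| + p(a^{*})\log\tfrac{1}{p(a^{*})},
\]
and both terms tend to $0$ since $x\log(1/x)\to 0$. Next I need $\KL(p\,\|\,q)$ to stay bounded, which requires care because $p$ varies. I would argue that $\KL(p\,\|\,q) \le \log\bigl(1/\min_b q(b)\bigr)$: writing $\KL = -H(p) - \sum_b p\log q$ gives $\KL \le \max_b \log(1/q(b))$, which is finite whenever $q$ has full support. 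This holds because the hinted frozen student is itself a softmax. The product $p(a^{*})\KL$ therefore also vanishes. I expect this boundedness to be the only delicate point; if $q$ could lack full support, I would restrict to actions in its support.

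\textbf{Step 3 (sampling counts).} The probability that at least one of $G$ i.i.d.\ draws equals $a^{*}$ is $1-(1-p)^G$. Bernoulli's inequality $(1-p)^G \ge 1 - Gp$ gives the upper bound $Gp(a^{*})$, or equivalently a union bound. The waiting time until the first appearance is geometric, with mean $1/p(a^{*})$ under plain sampling and $1/q(a^{*})$ under hinted sampling.

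\textbf{Step 4 (zero group advantage).} The group-relative advantage is $(R_i - \bar R)$, possibly divided by a standard deviation that is regularized by $\epsilon$. If all returns $R_i$ are equal, then $R_i = \bar R$ for every $i$, so every advantage is $0$ and the RL term contributes no gradient.
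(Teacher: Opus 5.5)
Your proposal is correct and follows essentially the same route as the paper: the softmax score-function computation that identifies the update with $-\nabla_u \KL(p_\theta\,\|\,q)$, the $x\log(1/x)\to 0$ argument for vanishing, the geometric waiting time together with Bernoulli's inequality, and mean subtraction for identical returns. Your bound $\KL(p\,\|\,q) \le \log\bigl(1/\min_b q(b)\bigr)$, which uses full support of $q$, makes explicit a step the paper leaves implicit, since its claim that $p(a^{*})\bigl(|\log\tfrac{q(a^{*})}{p(a^{*})}| + \KL(p\,\|\,q)\bigr)$ vanishes requires $\KL(p\,\|\,q)$ to stay bounded as $p$ varies.
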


\begin{proof}[Proof of \Cref{prop:starvation}]
Since $\mathbb{E}_{p_\theta}[\nabla_u \log p_\theta] = 0$,
$\nabla_u \KL(p_\theta \| q) = \mathbb{E}_{a \sim p}\!\left[ (\log p(a) - \log q(a)) \nabla_u \log p_\theta(a) \right]$,
which is the negative of the stated update. With $\partial \log p(b) / \partial u_a = \1[a = b] - p(a)$, the component on action $a$ is $p(a) \log \tfrac{q(a)}{p(a)} + p(a)\, \KL(p \| q)$, whose magnitude is at most $p(a) \bigl( |\log \tfrac{q(a)}{p(a)}| + \KL(p \| q) \bigr)$ and therefore vanishes as $p(a^{*}) \to 0$ at fixed $q$. For the sampling claims, the first appearance of $a^{*}$ under independent draws from $p$ is geometric with success probability $p(a^{*})$. This gives the mean $1/p(a^{*})$ and, by Bernoulli's inequality, $\Pr[a^{*} \text{ appears among } G \text{ draws}] = 1 - (1 - p(a^{*}))^{G} \le G\, p(a^{*})$. The hinted case is identical with $q$ in place of $p$. For the advantage claim, the group-relative advantage subtracts the group mean return, so identical returns give zero advantage on every token.
\end{proof}

For the clipped loss, the trainer-level analog of the identity in \Cref{prop:starvation} appears in prior work \citep{yang2026opid}. The component form in turn isolates what preventive distillation and group-based RL cannot do.

\begin{lemma}[Lower bound on the recovery loss]
\label{lem:kl_bound}
Let $\pi$ and $\pi'$ be two response distributions at the same context, and let $q$ and $p$ be their induced distributions over committed actions. Then for every action $a$ with $p(a) > 0$,
\begin{equation}
\KL(\pi \,\|\, \pi') \;\ge\; q(a) \log \frac{1}{p(a)} - \log 2.
\label{eq:kl_bound}
\end{equation}
In particular, with $\pi$ the privileged self-teacher and $\pi'$ the frozen student at $\tilde{c}_{t+k}$, the recovery loss of \Cref{eq:recovery} is at least $q(a^{*}) \log(1/p(a^{*})) - \log 2$.
\end{lemma}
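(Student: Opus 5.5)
We prove \Cref{lem:kl_bound} in two steps. First we pass from responses to committed actions with the data-processing inequality. Then we bound the action-level KL by coarsening once more, to the binary event $\{a\}$ versus its complement, and finish with an elementary entropy bound.

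\textbf{Step 1 (reduction to actions).} The map $y \mapsto \act(y)$ is a deterministic channel. It pushes $\pi$ forward to $q$ and $\pi'$ forward to $p$. By the data-processing inequality for KL divergence (equivalently, the chain rule $\KL(\pi\|\pi') = \KL(q\|p) + \mathbb{E}_{q}[\KL(\pi(\cdot\mid \act)\,\|\,\pi'(\cdot\mid\act))] \ge \KL(q\|p)$), we get $\KL(\pi\|\pi') \ge \KL(q\|p)$. Responses without a parsable action can be mapped to a null action, so the pushforward stays well defined. If $\KL(q\|p)=\infty$ there is nothing to prove, so we may assume absolute continuity wherever it is needed.

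\textbf{Step 2 (binary coarsening).} Fix $a$ with $p(a)>0$. Apply data processing again to the indicator $\1[\act = a]$. This gives $\KL(q\|p) \ge \mathrm{kl}(q(a)\,\|\,p(a))$, where $\mathrm{kl}(x\|y) = x\log\frac{x}{y} + (1-x)\log\frac{1-x}{1-y}$ is the Bernoulli KL. Expanding,
\[
\mathrm{kl}(q(a)\|p(a)) = q(a)\log\frac{1}{p(a)} + (1-q(a))\log\frac{1}{1-p(a)} - H_2(q(a)),
\]
where $H_2$ is the binary entropy in nats. The middle term is nonnegative, because $1-p(a)\le 1$; when $p(a)=1$ it is $0$ by the convention $0\log 0 = 0$, or else $q(a)=1$ and the term vanishes. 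Since $H_2 \le \log 2$, we conclude $\mathrm{kl}(q(a)\|p(a)) \ge q(a)\log(1/p(a)) - \log 2$. Chaining Steps 1 and 2 gives \eqref{eq:kl_bound}.

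The specialization follows by taking $\pi = \pib(\cdot\mid\tilde c_{t+k}, h(a^*))$, $\pi' = \pib(\cdot\mid \tilde c_{t+k})$, and $a=a^*$. These choices match the notation $p, q$ fixed at the start of this appendix. At the frozen student, $\pi_\theta = \pib$, so $\mathcal{L}^{\mathrm{rec}}_{t,k} = \KL(\pi\|\pi')$ and the bound applies directly. This is also exactly part (i) of \Cref{prop:rescue}.

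The only delicate point is the edge-case bookkeeping: $p(a)=1$, infinite KL, and unparsable responses. Otherwise the argument is two applications of data processing plus $H_2\le\log 2$.
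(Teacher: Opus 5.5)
Your proof is correct and follows the paper's argument. Both apply data processing to the indicator $\mathbf{1}[\act(y)=a]$, drop the nonnegative term $(1-q(a))\log\frac{1}{1-p(a)}$, and bound the binary entropy by $\log 2$; the only difference is that the paper pushes responses directly to the indicator instead of passing through $\KL(q\,\|\,p)$ first, and your edge-case remarks (infinite divergence, $p(a)=1$) just make explicit bookkeeping the paper leaves implicit.
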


\begin{proof}
The indicator $\1[\act(y) = a]$ is a function of the response $y$, so the data processing inequality gives $\KL(\pi \,\|\, \pi') \ge q(a) \log \frac{q(a)}{p(a)} + (1 - q(a)) \log \frac{1 - q(a)}{1 - p(a)}$. Since $1 - p(a) \le 1$, the second term is at least $(1 - q(a)) \log (1 - q(a))$. The right-hand side is therefore at least $q(a) \log \frac{1}{p(a)} - H(q(a))$, where $H(x) = -x \log x - (1 - x) \log (1 - x) \le \log 2$ is the binary entropy.
\end{proof}

\Cref{lem:kl_bound} holds for any hint, including the recorded hints of \Cref{app:recovery_kl}. It shows that the divergence from the self-teacher stays large whenever the self-teacher places substantial probability on an action that the student almost never takes, and that it can fall only when the student raises the probability of that action.

\begin{proof}[Proof of \Cref{prop:rescue}]
Part (i) is \Cref{lem:kl_bound} with $a = a^{*}$. For parts (ii) and (iii), let actions be sampled from a distribution $r$ and weighted by a signal $w$. Since $\partial \log p_\theta(b) / \partial u_a = \1[a = b] - p(a)$ at $p_\theta = p$, the expected update on $u_a$ is $\sum_b r(b)\, w(b) \bigl(\1[a = b] - p(a)\bigr) = r(a)\, w(a) - p(a)\, \mathbb{E}_r[w]$. Taking $r = p$ gives part (ii). For a bounded signal, $|w(a^{*}) - \mathbb{E}_p[w]| \le 2 \max_a |w(a)|$, so the expected update on $u_{a^{*}}$ is at most $2\, p(a^{*}) \max_a |w(a)|$ in magnitude. For part (iii), take $r = q$ and $w = \phi_\delta$, where $\phi_\delta(a) = \operatorname{clip}_{[-\delta,\delta]}\bigl(\log\tfrac{q(a)}{p(a)}\bigr)$ is the clipped distillation advantage. This gives
\begin{equation}
g_a \;=\; q(a)\,\phi_\delta(a) \;-\; p(a)\,\mathbb{E}_{q}\!\left[\phi_\delta\right].
\label{eq:recovery_coords}
\end{equation}
Under the condition $q(a^{*}) \ge e^{\delta}\, p(a^{*})$, we have $\phi_\delta(a^{*}) = \delta$. Since $\mathbb{E}_q[\phi_\delta] \le \delta$, \Cref{eq:recovery_coords} gives $g_{a^{*}} \ge \delta\, q(a^{*}) - \delta\, p(a^{*})$, which is positive because the condition implies $q(a^{*}) > p(a^{*})$.
\end{proof}

Without clipping, the recovery update on $a^{*}$ grows without bound as the student's probability of $a^{*}$ vanishes, and it stops only when the student's distribution matches the self-teacher's.

\begin{lemma}[Unclipped recovery update]
\label{lem:unclipped}
Suppose that $q(a^{*}) > p(a^{*})$. Without clipping, $g_{a^{*}} \to \infty$ as $p(a^{*}) \to 0$ while the remaining mass stays bounded away from zero. Moreover, the unclipped expected update of \Cref{eq:recovery_coords} vanishes in every component if and only if $p = q$.
\end{lemma}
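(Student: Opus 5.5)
Without clipping, \Cref{eq:recovery_coords} holds with $\phi_\delta$ replaced by $\phi(a) = \log\tfrac{q(a)}{p(a)}$. The expected update on action $a$ is then
\[
g_a = q(a)\log\tfrac{q(a)}{p(a)} - p(a)\,\KL(q \,\|\, p),
\]
because $\mathbb{E}_q[\phi] = \KL(q\|p)$. This is the same score-function computation as in the proof of \Cref{prop:rescue}, with $r = q$ and $w = \phi$. Both claims will be read off from this closed form.

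\textbf{Divergence.} Fix $q$, let $p(a^{*}) \to 0$, and keep the remaining mass of $p$ on the other actions bounded below, so that each $p(b)$ with $q(b) > 0$ stays away from zero.

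The first term of $g_{a^{*}}$ is $q(a^{*})\log(q(a^{*})/p(a^{*}))$. It tends to $+\infty$ like $q(a^{*})\log(1/p(a^{*}))$, since $q(a^{*}) > 0$ follows from $q(a^{*}) > p(a^{*}) \ge 0$.

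For the second term, split $\KL(q\|p)$ into the $a^{*}$ summand and the rest. The rest is bounded by the assumption on the remaining mass. The $a^{*}$ summand is $q(a^{*})\log(q(a^{*})/p(a^{*}))$, so multiplying by $p(a^{*})$ gives $p(a^{*})\,q(a^{*})\log(1/p(a^{*})) + O(p(a^{*}))$. This tends to $0$, because $x\log(1/x) \to 0$.

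Hence $g_{a^{*}} \to \infty$. The main care is to state the limiting regime precisely: $q$ is fixed and the other coordinates of $p$ stay in a compact set with positive entries where $q > 0$. Then the bounded part of the KL does not interfere.

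\textbf{Characterization of the zero set.} If $p = q$, every log-ratio vanishes and $\KL = 0$, so $g = 0$.

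Conversely, suppose $g_a = 0$ for all $a$, and write $D = \KL(q\|p) \ge 0$. Then $q(a)\log(q(a)/p(a)) = p(a) D$ for every $a$, and summing over $a$ gives $D = D \cdot \sum_a p(a) = D$, which says nothing. So instead I will argue by the sign.

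If $D > 0$, every $a$ with $p(a) > 0$ must have $q(a)\log(q(a)/p(a)) > 0$, hence $q(a) > p(a)$. Since $p_\theta$ has full support, this holds for every action. But then $\sum_a q(a) > \sum_a p(a) = 1$, a contradiction.

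So $D = 0$, and Gibbs' inequality gives $p = q$. This sign argument is the crux. I expect it to be the only nonroutine step, and it needs $q$ supported within the support of $p$, which full support of the softmax guarantees.
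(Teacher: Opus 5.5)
Your proof is correct and follows essentially the same route as the paper. It uses the same closed form $g_a = q(a)\log\frac{q(a)}{p(a)} - p(a)\,\KL(q\,\|\,p)$, the same split of $\KL(q\,\|\,p)$ into the $a^{*}$ summand plus a remainder bounded under the remaining-mass assumption, and the same contradiction $\sum_a q(a) > 1$ once $\KL(q\,\|\,p) > 0$ forces every ratio $q(a)/p(a)$ above one. Your sign argument is slightly leaner than the paper's, which also pins all ratios to a single root of $r\log r = \kappa$. Invoking Gibbs' inequality for the case $\KL(q\,\|\,p) = 0$ also removes the need for $q(a) > 0$, which the paper's $r(a) > 0$ step tacitly assumes.
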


\begin{proof}
Without clipping, \Cref{eq:recovery_coords} reads $g_{a^{*}} = q(a^{*}) \log \tfrac{q(a^{*})}{p(a^{*})} - p(a^{*})\, \KL(q \| p)$. Write $\KL(q \| p) = q(a^{*}) \log \tfrac{q(a^{*})}{p(a^{*})} + B$, where $B = \sum_{a \neq a^{*}} q(a) \log \tfrac{q(a)}{p(a)}$ stays bounded when the remaining mass is bounded away from zero. Then $g_{a^{*}} = (1 - p(a^{*}))\, q(a^{*}) \log \tfrac{q(a^{*})}{p(a^{*})} - p(a^{*})\, B \to \infty$. For the second claim, if $p = q$, both terms of \Cref{eq:recovery_coords} vanish. Conversely, suppose that $q(a) \log \tfrac{q(a)}{p(a)} = p(a)\, \kappa$ for all $a$, where $\kappa = \KL(q \| p) \ge 0$. Writing $r(a) = q(a)/p(a) > 0$, this reads $r(a) \log r(a) = \kappa$ for every $a$. If $\kappa = 0$, then $r \equiv 1$ and $p = q$. If $\kappa > 0$, then all $r(a)$ equal a single root $r^{*} > 1$, since $r \log r \le 0$ on $(0, 1]$ and is strictly increasing on $[1, \infty)$. But then $1 = \sum_a q(a) = r^{*} \sum_a p(a) = r^{*} > 1$, a contradiction.
\end{proof}

The last result concerns the recovery budget $K$.

\begin{proposition}[Deeper mistakes need more recovery turns]
\label{prop:depth}
Suppose that completing the task after the pivotal turn requires taking a specific action at each of $d$ successive turns, and that the plain policy places mass at most $\varepsilon < 1$ on each required action along this chain. After $K < d$ enforced recovery turns, the plain policy completes the remaining chain with probability at most $\varepsilon^{\,d - K}$.
\end{proposition}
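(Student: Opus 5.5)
The plan is to model the post-mistake continuation as a chain of $d$ required actions and bound the probability of completing it by a product of conditional probabilities. Label the required actions $b_1, \dots, b_d$ at turns $t+1, \dots, t+d$. Let $E_j$ be the event that the policy takes $b_j$ at the $j$-th turn after the pivotal turn. The hypothesis says that completing the task requires $E_1 \cap \dots \cap E_d$. The $K$ enforced recovery turns are the first $K$ links: they are taken by the teacher-named actions (or by the hinted self-teacher), not by the plain policy. The context after them is therefore exactly the state reached by executing $b_1, \dots, b_K$. From that state, the plain policy must still produce $E_{K+1} \cap \dots \cap E_d$ on its own.

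Next I apply the chain rule for the trajectory distribution of the plain policy started from $\tilde c_{t+K+1}$:
\[
\Pr\Bigl[\textstyle\bigcap_{j=K+1}^{d} E_j\Bigr] \;=\; \prod_{j=K+1}^{d} \Pr\Bigl[E_j \,\Bigm|\, \textstyle\bigcap_{i=K+1}^{j-1} E_i\Bigr].
\]
Conditioned on the earlier required actions having been taken, the context at turn $j$ lies on the chain. Each factor is then the plain policy's mass on $b_j$ at that context, which is at most $\varepsilon$ by hypothesis. This is the one place where the hypothesis must be read correctly: the bound $\varepsilon$ applies at each context along the chain, whatever the environment's observations are. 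If observations are stochastic, I average the bound over them, since a bound that holds pointwise in the context also holds in expectation. The product of $d-K$ factors, each at most $\varepsilon$, gives $\varepsilon^{d-K}$. Because completion is contained in the intersection, its probability is at most this bound.

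The main obstacle is not the calculation but stating the right event inclusion. "Completing the task requires" must mean that every completing continuation passes through the chain. Any stray action therefore either fails or, by the hypothesis, cannot re-enter the chain with fresh chances. I would state this as the inclusion of the completion event in $\bigcap_j E_j$. I would also note that the case $K \ge d$ is excluded precisely because the product then becomes empty. A final remark would read the bound against the recovery budget: since $\varepsilon < 1$, each additional enforced recovery turn multiplies the bound on unaided completion by $1/\varepsilon$. This is consistent with deeper mistakes needing larger $K$.
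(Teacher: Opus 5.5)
Your proposal is correct and is the paper's argument: the chain rule writes the probability of the $d-K$ remaining required actions as a product of conditional probabilities, each at most $\varepsilon$. Your explicit event inclusion and your averaging over stochastic observations only make precise what the paper leaves implicit, and the one small quibble is that $K \ge d$ is excluded not because the argument breaks but because the bound $\varepsilon^{d-K} \ge 1$ is then vacuous.
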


\begin{proof}[Proof of \Cref{prop:depth}]
By the chain rule, the probability of taking all $d - K$ remaining required actions is the product of their conditional probabilities, each at most $\varepsilon$.
\end{proof}

\Cref{prop:depth} guides the recovery budget, since $K$ should grow with the depth $d$ of the required chain. Validation is consistent with this, selecting $K=2$ on ALFWorld, whose tasks chain subgoals over the longest trajectories, and $K=1$ on WebShop and Search-based QA (\Cref{fig:ablation_k}), although we do not measure the depth of individual mistakes. On ALFWorld, $K=2$ also matches the two guided turns after the pivotal mistake that restore success in the replay of \Cref{fig:motivation}b.

\begin{lemma}[Surrogate form of the recovery update]
\label{lem:surrogate}
Let $r = q / p$. The expected recovery update of \Cref{eq:recovery_coords} is the exact gradient of the frozen surrogate $F(u) = \mathbb{E}_{a \sim p_\theta}\!\left[ r(a)\, \phi_\delta(a) \right]$ at $p_\theta = p$, and without clipping, $F = \KL(q \,\|\, p)$ at $p_\theta = p$.
\end{lemma}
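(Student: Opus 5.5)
The plan is a direct computation. The key observation is that the importance ratio $r = q/p$ converts an expectation under the student $p_\theta$ into one under the self-teacher $q$ exactly at the frozen point $p_\theta = p$. Throughout, $r$ and $\phi_\delta$ are treated as \emph{frozen} functions of the action: they are built from the frozen $p$ and $q$ and do not depend on $u$. Only the sampling distribution $p_\theta = \softmax(u)$ carries the dependence on the logits. Since $p_\theta$ has full support, $p(a) > 0$ for every $a \in \mathcal{A}(c)$, so $r$ is well defined.

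The first step differentiates $F(u) = \sum_a p_\theta(a)\, r(a)\, \phi_\delta(a)$ with respect to a logit $u_b$. I would use the softmax Jacobian from the proof of \Cref{prop:rescue}, namely $\partial p_\theta(a)/\partial u_b = p_\theta(a)\bigl(\1[a=b] - p_\theta(b)\bigr)$. This gives
\begin{equation*}
\frac{\partial F}{\partial u_b} \;=\; p_\theta(b)\, r(b)\, \phi_\delta(b) \;-\; p_\theta(b) \sum_a p_\theta(a)\, r(a)\, \phi_\delta(a).
\end{equation*}

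The second step evaluates this at $p_\theta = p$ and uses $p(a)\, r(a) = q(a)$. The first term becomes $q(b)\,\phi_\delta(b)$. The sum becomes $\sum_a q(a)\,\phi_\delta(a) = \mathbb{E}_q[\phi_\delta]$. The gradient is therefore $q(b)\,\phi_\delta(b) - p(b)\,\mathbb{E}_q[\phi_\delta]$, which is exactly $g_b$ in \Cref{eq:recovery_coords}. This identifies the expected recovery update as the gradient of $F$.

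The third step handles the unclipped case by replacing $\phi_\delta(a)$ with $\log \tfrac{q(a)}{p(a)}$. The value at the frozen point is then $F = \sum_a p(a)\,\tfrac{q(a)}{p(a)} \log \tfrac{q(a)}{p(a)} = \sum_a q(a) \log \tfrac{q(a)}{p(a)} = \KL(q \,\|\, p)$.

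No step is a genuine obstacle. The only point needing care is to state explicitly that the gradient is taken with $r$ and $\phi_\delta$ held fixed, so that the $p$ inside them is not differentiated. This is what makes $F$ a surrogate rather than the forward KL itself. It also explains why its gradient differs from $\nabla_u \KL(q \,\|\, p_\theta) = p_\theta - q$, consistent with the remark in \Cref{sec:recovery} that the update is not the gradient of \Cref{eq:recovery}.
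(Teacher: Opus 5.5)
Your proof is correct and follows essentially the same route as the paper, which writes $\nabla_u F = \mathbb{E}_{p_\theta}[r\,\phi_\delta\,\nabla_u \log p_\theta]$ and lets the factor $r$ convert the expectation over $p$ into one over $q$ at $p_\theta = p$; this is your coordinatewise softmax-Jacobian computation in score-function form, and the unclipped identity $F = \KL(q\,\|\,p)$ is the same one-line calculation. Your explicit statement that $r$ and $\phi_\delta$ are held fixed under differentiation is the correct reading of ``frozen'', which the paper leaves implicit.
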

\begin{proof}
$\nabla_u F = \mathbb{E}_{p_\theta}[r\, \phi_\delta \nabla_u \log p_\theta]$, and at $p_\theta = p$, the reweighting by $r$ turns the expectation over $p$ into one over $q$, which yields the update of \Cref{prop:rescue}. Without clipping, at $p_\theta = p$, $F = \sum_a p(a) \tfrac{q(a)}{p(a)} \log \tfrac{q(a)}{p(a)} = \KL(q \| p)$.
\end{proof}

Three remarks connect the analysis to the implementation. First, the acceptance filters of \Cref{sec:recovery} replace the sampling distribution $q$ by its conditional on the accepted event while leaving the coefficients unchanged. The formulas above thus hold with $q(a)$ read as the accepted mass, and the expected update on the recovery action stays proportional to the mass that the filtered self-teacher places on it. Second, all statements concern the recovery context $\tilde{c}_{t+k}$, which follows the student's own mistake, and the student is trained without the hint. Supervised fine-tuning enforces teacher actions at the states that the teacher visits, whereas recovery distillation enforces the recovery action at the state produced by the student's own mistake with a single hinted generation. Recovery therefore avoids the compounding mismatch of imitation at the states that only an expert visits \citep{ross2011reduction}. Third, \Cref{prop:starvation,prop:rescue} make the division of labor in \method{} precise. Preventive distillation tempers the student where it already places mass, and recovery distillation plants mass exactly on the modes that the student has starved, at the turns that its mistakes produce.

\section{Experiment Details}
\label{app:exp_details}

This appendix complements the setup in \Cref{sec:setup} with the full training configurations (\Cref{app:hyperparameters}), the per-benchmark evaluation protocols (\Cref{app:eval_details}), and the baseline descriptions and configurations (\Cref{app:baseline_details}).

\subsection{Training Configurations}
\label{app:hyperparameters}

All methods share one on-policy training stack built on verl and train on H100 nodes. Training uses the standard ALFWorld training tasks, the WebShop instructions outside the held-out split, and the Search-R1 training split of Natural Questions and HotpotQA ($169{,}615$ questions). The student is served by vLLM for rollouts (tensor parallel $1$, thinking disabled) and updated with FSDP. Rollouts sample at temperature $1.0$ during training and $0.4$ at validation. Optimization uses token-mean loss aggregation, gradient clipping $1.0$, a low-variance KL loss, and discount $\gamma=0.95$. Teachers are served as vLLM endpoints with a $32$K context window, with tensor parallel $2$ for Qwen3-30B-A3B and $8$ for Qwen3.5-122B-A10B. Teacher calls use the served model's default sampling configuration. \Cref{tab:train_hparams} lists the per-benchmark hyperparameters, and \Cref{tab:method_hparams} lists the \method{} configuration. The recovery weight $w_\text{rec}$ is selected per benchmark with a halving sweep on validation performance, and on Search-based QA, the sweep selects a smaller weight for the 8B student. The recovery budget $K$ and the weight $w_\text{rec}$ interact, since each additional recovery turn adds one more distilled sequence per pivotal turn and thereby multiplies the total recovery signal. Deeper recovery therefore calls for a proportionally smaller weight, and on WebShop, $K=2$ requires a quarter of the $K=1$ weight to remain stable. Baselines receive the same tuning budget.

\begin{table}[h]
\centering
\small
\setlength{\tabcolsep}{6pt}
\renewcommand{\arraystretch}{1.05}
\begin{tabular}{l l l l}
\toprule
 & \textbf{ALFWorld} & \textbf{WebShop} & \textbf{Search-based QA} \\
\midrule
Tasks per step               & $16$              & $16$              & $64$ \\
Rollouts per task (group size) & $8$             & $8$               & $8$ \\
Trajectories per step        & $128$             & $128$             & $512$ \\
PPO mini-batch size          & $128$             & $64$              & $256$ \\
Learning rate                & $1\times10^{-6}$  & $1\times10^{-6}$  & $1\times10^{-6}$ \\
KL coefficient               & $0.01$            & $0.01$            & $0.01$ \\
Clip ratio                   & $0.2$             & $0.2$             & $0.2$ \\
Prompt / response length     & $2048$ / $512$    & $4096$ / $512$    & $4096$ / $512$ \\
Max turns per trajectory        & $30$              & $15$              & $4$ \\
History length               & $5$               & $2$               & $4$ \\
Training steps               & $160$             & $160$             & $160$ \\
Validation size / frequency  & $128$ / $10$      & $128$ / $10$      & $256$ / $10$ \\
\bottomrule
\end{tabular}
\vspace{4pt}
\caption{Training hyperparameters per benchmark. Both students share this configuration.}
\label{tab:train_hparams}
\end{table}

\begin{table}[h]
\centering
\small
\setlength{\tabcolsep}{6pt}
\renewcommand{\arraystretch}{1.05}
\begin{tabular}{l l l l}
\toprule
 & \textbf{ALFWorld} & \textbf{WebShop} & \textbf{Search-based QA} \\
\midrule
Candidate turns per trajectory $m$    & $5$    & $5$    & $2$ \\
Recovery turns $K$                    & $2$    & $1$    & $1$ \\
Recovery weight $w_\text{rec}$        & $1.0$  & $0.25$ & $0.0625$ (8B) / $0.125$ (1.7B) \\
Preventive / lesson weights $w_\text{prev}$ / $w_\text{traj}$ & $0.001$ / $0.001$ & $0.001$ / $0.001$ & $0.001$ / $0.001$ \\
Recovery clip $\delta$                & $5.0$  & $5.0$  & $5.0$ \\
Max recoveries per training step      & $64$   & $64$   & $64$ \\
\bottomrule
\end{tabular}
\vspace{4pt}
\caption{\method{} hyperparameters per benchmark. The preventive-only ablation uses $w_\text{prev} = 0.1$ (\Cref{app:additional_results}).}
\label{tab:method_hparams}
\end{table}

\subsection{Evaluation Details}
\label{app:eval_details}

For every method and benchmark, we train for $160$ steps, select the best checkpoint on a validation set, and evaluate the selected checkpoint on a held-out test set. Every experiment is run with three random seeds, and all reported results are averages over them. Validation and test rollouts sample at temperature $0.4$ and are capped at $30$ turns on ALFWorld, $15$ on WebShop, and $4$ on Search-based QA.

\paragraph{ALFWorld.} Checkpoint selection uses $128$ validation tasks evaluated every $10$ training steps at temperature $0.4$. The selected checkpoint is evaluated on all $274$ held-out tasks, i.e., the $140$ tasks of the \texttt{valid\_seen} split and the $134$ tasks of the \texttt{valid\_unseen} split. The reported average is the unweighted mean of the six task-type success rates.

\paragraph{WebShop.} The benchmark provides $6{,}910$ instructions over $1{,}000$ products. The first $500$ instructions are held out, and the rest are used for training. Checkpoint selection uses a $128$-instruction validation set evaluated every $10$ steps. The selected checkpoint runs on each of the $500$ held-out instructions. The success rate counts strictly completed purchases, and the score is the dense task score. Product retrieval uses BM25 ($k_1=1.5$, $b=0.75$) over the title, description, category, bullet-point, and option fields at both training and evaluation.

\paragraph{Search-based QA.} Checkpoint selection uses a mixed validation set of $256$ questions evaluated every $10$ steps. The per-dataset accuracies use a balanced evaluation set of $725$ questions, drawn with a fixed seed from a held-out pool of $51{,}713$ questions, with $100$ per dataset plus all $125$ Bamboogle questions. The metric is strict exact match, and the reported average is the unweighted mean of the seven per-dataset accuracies. The agent queries an E5 retriever \citep{wang2022e5} over the 2018 Wikipedia corpus and receives the top-$3$ passages per call, identically at training and evaluation.

\subsection{Baseline Descriptions and Configurations}
\label{app:baseline_details}

\paragraph{Standard post-training.}
\begin{itemize}
\item \textbf{GRPO} \citep{shao2024deepseekmath} optimizes a group-relative advantage from outcome rewards.
\item \textbf{OPSD} \citep{zhao2026opsd} distills a privileged self-teacher on the student's own responses.
\item \textbf{RLSD} \citep{yang2026rlsd} combines the two through teacher-guided advantage rescaling.
\item \textbf{SDAR} \citep{lu2026sdar} combines the two through a gated distillation loss.
\end{itemize}

\paragraph{Turn-level distillation for multi-turn agents.}
\begin{itemize}
\item \textbf{TurnOPD} \citep{zhou2026turnopd} makes the distillation signal turn-aware.
\item \textbf{TCOD} \citep{wang2026tcod} expands the distilled trajectory depth from short to long with a curriculum.
\item \textbf{SOD} \citep{zhong2026sod} reweights turn-level divergences.
\item \textbf{StepOPSD} \citep{zhang2026stepopsd} weights turns by hindsight from successful trajectories in the same rollout group.
\item \textbf{AgentOPSD} \citep{wang2026agentopsd} converts outcome rewards into turn-level credit through recursive self-distillation.
\end{itemize}

\paragraph{High-level guidance through skills or pivotal turns.}
\begin{itemize}
\item \textbf{Skill-GRPO}, the skill-augmented GRPO baseline of OPID \citep{yang2026opid}, injects retrieved skills into training prompts.
\item \textbf{Skill-SD} \citep{wang2026skillsd} conditions the self-teacher on retrieved skills.
\item \textbf{OPID} distills trajectory- and turn-level skills extracted from the student's own rollouts.
\item \textbf{PivotRL} \citep{yi2026pivotrl} concentrates updates on pivotal turns identified from rollout outcomes.
\end{itemize}

\paragraph{Configurations.} All baselines share the benchmarks, data, and optimization of \Cref{app:hyperparameters}, and GRPO uses the same advantage estimator with no teacher signal. The skill-based baselines (OPSD, Skill-SD, RLSD, SDAR, and Skill-GRPO) draw on a per-benchmark skill bank distilled from the student's own earlier rollouts, with $47$ entries for ALFWorld, $200$ for WebShop, and $75$ for Search-based QA. Skill-GRPO injects the top-$6$ retrieved skills into training prompts only. StepOPSD instead uses the first successful trajectory from the same rollout group as hindsight guidance. The auxiliary distillation coefficient is $1.0$ for OPSD, $0.001$ for Skill-SD, and $0.01$ for SDAR. RLSD and StepOPSD reshape advantages and carry no auxiliary loss.

\paragraph{Implementation differences.} Two implementation choices differ from the original papers. OPSD's full-vocabulary divergence is approximated with a sampled-token reverse-KL surrogate, and the RLSD and StepOPSD self-teachers re-synchronize every training step instead of every ten.

\subsection{Prompts}
\label{app:prompts}

This appendix lists the prompt templates of \method{}. Placeholders appear in shaded curly braces, tags that the model must emit are shown in green, and dashed lines separate the parts of each template. Several templates address the model directly, writing ``step'' for what the paper calls a turn.

\paragraph{Student turn prompt.} At every turn, the student receives one prompt containing the task, the current observation, and the admissible actions. It responds with reasoning in \texttt{<think>} tags followed by one action in \texttt{<action>} tags. The example below is the first turn of a held-out ALFWorld task, with the location and action lists abbreviated; WebShop and Search-based QA use the same structure with their own action formats.

\begin{promptbox}{Student turn prompt (ALFWorld)}
You are an expert agent operating in the ALFRED Embodied Environment.\\
Your current observation is: -= Welcome to TextWorld, ALFRED! =-\\[5pt]
You are in the middle of a room. Looking quickly around you, you see a bed 2, a bed 1, a desk 1, a drawer 11, ..., a drawer 1, a dresser 1, a garbagecan 1, a safe 1, a sidetable 2, and a sidetable 1.
\promptsep
Your task is to: look at alarmclock under the desklamp.\\
Your admissible actions of the current situation are: ['go to bed 1', 'go to bed 2', 'go to desk 1', 'go to drawer 1', ..., 'go to sidetable 2', 'inventory', 'look'].
\promptsep
Now it's your turn to take an action.\\
You should first reason step-by-step about the current situation. This reasoning process MUST be enclosed within \ptag{<think> </think>} tags.\\
Once you've finished your reasoning, you should choose an admissible action for current step and present it within \ptag{<action> </action>} tags.
\end{promptbox}

\paragraph{Teacher prompt.} For each trajectory, the teacher receives the task, the outcome, the indices of all turns as \emph{eligible turns}, and the formatted trajectory, and it returns the report of \Cref{app:method_details} in one reply. The template calls the eligible turns candidate step indices, and the turns that the teacher selects among them are the candidate turns of \Cref{sec:pivotal_analysis}. The JSON fields \texttt{episode\_summary}, \texttt{episode\_lesson}, and \texttt{step\_lessons} carry the trajectory summary, the trajectory-level lesson, and the per-turn lessons, and one \texttt{<correct\_action>} block per chosen turn names the gold action. The action-format example is benchmark-specific, shown here for ALFWorld. On WebShop and Search-based QA, the template appends short environment rules that force each recommended action to be a literal executable action, such as clicking every required option value before a purchase or adding new disambiguating terms to a search query.

\begin{promptbox}{Teacher prompt}
Analyze the following agent episode. Think deeply, step by step, about which step was the most critical for the outcome and what the CORRECT action would have been at that step.
\promptsep
You MUST output BOTH of the following:\\[5pt]
1) A single JSON object (no other JSON elsewhere) with EXACTLY these fields:
\begin{promptcard}
\{\\
\hspace*{2ex}"episode\_summary": "string",\\
\hspace*{2ex}"episode\_lesson": "string",\\
\hspace*{2ex}"step\_lessons": \{\\
\hspace*{4ex}"<idx>": "policy-facing imperative lesson text for step idx"\\
\hspace*{2ex}\}\\
\}
\end{promptcard}
Use up to \ph{max lesson count} keys in step\_lessons, chosen from the candidate indices below. Each step\_lessons value should be one short imperative sentence.\\[5pt]
2) For EACH step you placed in step\_lessons, ALSO emit a block of the exact form:
\begin{promptcard}
\ptag{<correct\_action step="N">}action text the agent should have emitted at step N\ptag{</correct\_action>}
\end{promptcard}
The action text must match the agent's action format (e.g. go to drawer 1). Do not include explanations or quotes inside the block.
\promptsep
\textbf{Important constraints:}\\
- Step indexing is 0-based.\\
- The chosen steps in step\_lessons MUST come from the candidate indices list below.\\
- Wrap the JSON in a single \textasciigrave\textasciigrave\textasciigrave json ...\textasciigrave\textasciigrave\textasciigrave{} fenced block.
\promptsep
\textbf{Episode context:}\\
- Task description: \ph{task description}\\
- episode\_success: \ph{outcome}\\
- Candidate step indices: \ph{eligible turn indices}\\
- Interaction trajectory: \ph{formatted trajectory}
\end{promptbox}

\paragraph{Recovery-action prompt.} At each recovery turn after a pivotal mistake, the teacher receives the student-facing observation of the current post-mistake state and names the single best next action, which becomes the recovery action of \Cref{sec:method}. The situation block already contains the task, the recent history, and the admissible actions, so the teacher sees exactly what the student sees.

\begin{promptbox}{Recovery-action prompt (ALFWorld)}
You are an expert alfworld agent. Below is the exact situation another agent is currently facing (its task, recent history, current observation, and the list of admissible actions).\\[5pt]
Decide the single best next action to make progress on the task from THIS state.
\promptsep
\textbf{Rules:}\\
- Choose the action verbatim from the admissible actions listed in the situation.\\
- Output exactly one block of the form:
\begin{promptcard}
\ptag{<correct\_action>}action text\ptag{</correct\_action>}
\end{promptcard}
- The action text must match the agent's action format (e.g. go to drawer 1).\\
- Do not add explanations inside the block. Keep any reasoning outside it brief.
\promptsep
\textbf{Situation:}\\
\ph{observation at the current post-mistake state}
\end{promptbox}

\paragraph{Hint.} The hint $h(a)$ renders a named action as a short passage that only the self-teacher sees. The same passage serves preventive distillation with the gold action and recovery distillation with the recovery action, and the leakage control of \Cref{app:method_details} discards any response that acknowledges it.

\begin{promptbox}{Hint $h(a)$}
A reasonable approach at this step is: \ph{named action}\\[3pt]
Reason from scratch in your own style as if you arrived at this approach independently. Do NOT reference, acknowledge, or quote the hint above in your output. Decide and act as your own choice.
\end{promptbox}

\subsection{Benchmark Examples}
\label{app:benchmark_examples}

We show one successful episode from each benchmark to illustrate its tasks, observations, and actions. Each episode is copied from a rollout log. For brevity, we show only the committed action of each turn, omit the reasoning and the list of admissible actions, and abbreviate long observations with [...]. Each numbered card is one turn, and highlighted text marks the evidence that determines the next action. The three episodes come from different policies, namely the Qwen3-8B student after $120$ steps of standard on-policy distillation on ALFWorld (\Cref{app:analysis}), Qwen3-235B-A22B on WebShop, and Qwen3-8B during AgentOPSD training on Search-based QA.

\paragraph{ALFWorld.} The agent must find a spatula, clean it at the sink, and place it on the dining table. It completes the task in seven turns.

\begin{examplebox}{Example episode (ALFWorld, Clean task type)}
\extask{Task}{clean some spatula and put it in diningtable.}
\exinit{You are in the middle of a room. Looking quickly around you, you see a cabinet 2, a cabinet 1, a coffeemachine 1, a countertop 3, [...], a diningtable 1, [...], a sinkbasin 1, [...], and a toaster 1.}
\exturn{1}{look}{You are in the middle of a room. Looking quickly around you, you see nothing.}
\exturn{2}{go to diningtable 1}{You arrive at diningtable 1. On the diningtable 1, you see a bread 2, a bread 1, [...], a \exhl{spatula 2}, and a spoon 2.}
\exturn{3}{take spatula 2 from diningtable 1}{You pick up the spatula 2 from the diningtable 1.}
\exturn{4}{go to sinkbasin 1}{You arrive at sinkbasin 1. On the sinkbasin 1, you see a cup 2, a fork 2, and a knife 1.}
\exturn{5}{clean spatula 2 with sinkbasin 1}{You \exhl{clean the spatula 2} using the sinkbasin 1.}
\exturn{6}{go to diningtable 1}{You arrive at diningtable 1. On the diningtable 1, you see a bread 2, a bread 1, [...], and a spoon 2.}
\exturn{7}{move spatula 2 to diningtable 1}{You move the spatula 2 to the diningtable 1.}
\exoutcome{task completed (reward 10)}
\end{examplebox}

\paragraph{WebShop.} The agent must buy a product that matches every attribute, option, and price constraint of the instruction. It searches once, opens a matching product, selects the requested color and size, and buys it.

\begin{examplebox}{Example episode (WebShop)}
\extask{Instruction}{Find me men's shorts with drawstring closure, elastic waist for gym workout with color: \#2 army green, and size: 30, and price lower than 30.00 dollars}
\exturn{1}{search[men's gym workout shorts drawstring closure elastic waist army green size 30 under \$30]}{`Back to Search' [SEP] `Page 1 (Total results: 50)' [SEP] `Next $>$' [SEP] \exhl{`B07WMMYB6G'} [SEP] `MAGCOMSEN Men's Gym Workout Shorts with Pockets Mesh Liner Quick Dry Running Shorts for Jogging, Hiking' [SEP] \exhl{`\$17.98 to \$17.99'} [SEP] `B092T8CY89' [SEP] [...]}
\exturn{2}{click[b07wmmyb6g]}{`Back to Search' [SEP] `$<$ Prev' [SEP] `color' [SEP] `\#1 black grey' [SEP] [...] [SEP] \exhl{`\#2 army green'} [SEP] [...] [SEP] `size' [SEP] \exhl{`30'} [SEP] `32' [SEP] [...] [SEP] `Price: \$17.98 to \$17.99' [SEP] [...] [SEP] `Buy Now'}
\exturn{3}{click[\#2 army green]}{(the same product page as in turn 2)}
\exturn{4}{click[30]}{(the same product page as in turn 2)}
\exturn{5}{click[buy now]}{Thank you for shopping with us! [SEP] [...] [SEP] Purchased [SEP] asin [SEP] B07WMMYB6G [SEP] options [SEP] \exhl{\texttt{\{"color": "\#2 army green", "size": "30"\}}} [SEP] [...] [SEP] Your score (min 0.0, max 1.0) [SEP] 1.0}
\exoutcome{purchased a product that matches the instruction (score 1.0)}
\end{examplebox}

\paragraph{Search-based QA.} The agent must answer a two-hop question. It first searches for the director of the film and then for the TV drama that this director co-created.

\begin{examplebox}{Example episode (Search-based QA)}
\extask{Question}{What is the TV drama co-created by this Irish film and TV writer/director who directed the 2001 film ``On the Edge''?}
\exturn[Information]{1}{<search>Director of the 2001 film "On the Edge"</search>}{Doc 1: ``On the Edge (2001 film)'' On the Edge (2001 film) On the Edge is a 2001 Irish film \exhl{directed by John Carney} and starring Cillian Murphy, Tricia Vessey, Jonathan Jackson and Stephen Rea. [...]}
\exturn[Information]{2}{<search>TV drama co-created by John Carney</search>}{Doc 1: ``John Carney (director)'' John Carney (director) John Carney (born 1972) is an Irish film and TV writer/director who specialises in low-budget indie films. [...] He is also a co-creator of the Irish TV drama series ``\exhl{Bachelors Walk}''. [...]}
\exturn{3}{<answer>Bachelors Walk</answer>}{}
\exoutcome{exact match with the gold answer (score 1.0)}
\end{examplebox}

\subsection{SWE-Bench Verified}
\label{app:swebench}
\paragraph{Training.} We train on a curated software-engineering curriculum of root-cause bug-fix instances augmented with single-file SWE-rebench tasks, decontaminated against SWE-Bench Verified by removing all overlapping repositories and issues. The agent operates an OpenHands-style function-calling scaffold whose tools are repository exploration, file editing, shell execution, and patch submission, with a $100$-turn and $3{,}600$-second budget per episode and a $196$K-token context. Each training step rolls out $G = 16$ trajectories for each of $32$ instances at temperature $0.6$, and the teacher is served as a vLLM endpoint. Because SWE-Bench episodes are long and containerized, we adapt the configuration as follows. For each task group with failures, the teacher audits one failed trajectory at its final committed action ($m = 1$) and names a gold action. The hint uses the same template as in \Cref{app:prompts}. Unlike on the other benchmarks, the hinted distribution comes from Nemotron-3-Super rather than from a privileged self-teacher, and it enters the unclipped preventive advantage $A^{\mathrm{prev}}_\ell$ of \Cref{eq:advantage} at weight $w_\text{prev} = 0.1$. The recovery budget is $K = 0$, since the audited turn is the final committed action, after which the recorded trajectory contains no turn at which to apply recovery distillation. Extending pivot detection to earlier turns, where recovery distillation would apply, is left to future work. The standard OPD baseline is on-policy distillation from the same Nemotron-3-Super teacher, which distills its token-level distribution on the student's own trajectories \citep{agarwal2024gkd}, and \method{} adds preventive distillation to this objective. Both methods share data, scaffold, and tuning budget, and they train at a constant learning rate of $3 \times 10^{-6}$.

\paragraph{Evaluation.} We evaluate on the $500$ tasks of SWE-Bench Verified using the OpenCode agent scaffold within the NeMo Gym harness. Each task is run once under each of $3$ evaluation seeds in an isolated per-task sandbox, and the agent queries our model through a disaggregated vLLM deployment with separate prefill and decode workers. We sample at temperature $1.0$ with top-$p$ of $0.95$ and report the resolve rate, i.e., the percentage of tasks whose submitted patch passes the associated unit tests, averaged over the three evaluation seeds.

\section{Additional Results}
\label{app:additional_results}

\Cref{tab:ablation_full} reports the component ablations of \Cref{sec:discussion} on every ALFWorld task type. The preventive-only variant sets $K = 0$ and raises the preventive weight from $w_\text{prev} = 0.001$ (\Cref{tab:method_hparams}) to $0.1$. Without recovery distillation, the smaller weight would add only a small pivot-specific signal to the group-relative RL advantage. The larger weight therefore makes preventive only a stronger reference, so its comparison with \method{} tests whether recovery distillation adds to a substantial preventive signal rather than to group-based RL alone. The variant with random pivotal turns injects the same hints as \method{} at randomly chosen turns under a matched budget. Relative to preventive only, removing gold actions from the hints lowers the success rate the most on Pick2, by $9.3\%$. This gap suggests that naming the action is necessary on task types whose decisions cannot be inferred from the observations alone.

\begin{table}[h]
\centering
\small
\setlength{\tabcolsep}{6pt}
\renewcommand{\arraystretch}{1.15}
\begin{tabular}{l@{\hspace{7pt}} ccccccc}
\toprule
\textbf{Variant}
& {\footnotesize Pick} & {\footnotesize Look} & {\footnotesize Clean} & {\footnotesize Heat} & {\footnotesize Cool} & {\footnotesize Pick2} & {\footnotesize\textbf{Avg.}} \\
\midrule
Random pivotal turns
 & 71.4 & \cellcolor{cellsecond}\underline{78.6} & 76.2 & 61.1 & 50.0 & 38.6 & 62.6 \\
Hints w/o gold actions
 & 87.3 & \cellcolor{cellbest}\textbf{82.7} & 90.5 & 63.7 & \cellcolor{cellsecond}\underline{72.9} & 34.1 & 71.9 \\
Reverse-KL recovery
 & 75.0 & 71.4 & 85.7 & 50.0 & 61.5 & 43.4 & 64.5 \\
Recovery only
 & \cellcolor{cellbest}\textbf{87.7} & 63.1 & 66.7 & 40.9 & 53.9 & \cellcolor{cellbest}\textbf{76.2} & 64.7 \\
Preventive only
 & 83.8 & 71.4 & \cellcolor{cellsecond}\underline{93.2} & \cellcolor{cellsecond}\underline{70.2} & \cellcolor{cellsecond}\underline{72.9} & 43.4 & \cellcolor{cellsecond}\underline{72.5} \\
\textbf{\method{}}
 & \cellcolor{cellsecond}\underline{87.6} & 55.9 & \cellcolor{cellbest}\textbf{93.7} & \cellcolor{cellbest}\textbf{72.6} & \cellcolor{cellbest}\textbf{73.9} & \cellcolor{cellsecond}\underline{58.5} & \cellcolor{cellbest}\textbf{73.7} \\
\bottomrule
\end{tabular}
\vspace{4pt}
\caption{\textbf{Full component ablations on ALFWorld with the Qwen3-1.7B student.} Each variant changes one component of \method{} and holds the rest of the training recipe fixed, except that preventive only also raises $w_\text{prev}$ to $0.1$. This table extends \Cref{tab:ablation_components} to every task type. Results are averaged over three seeds.}
\label{tab:ablation_full}
\end{table}

\Cref{fig:ablation_k} presents the validation trends for the recovery-budget ablation in \Cref{sec:discussion}, where $K = 0$ is the preventive-only variant. The choice of $K$ has a substantial effect on validation performance. On WebShop, the score at the end of training differs by up to $19.5\%$ across budgets. More recovery turns, however, do not always improve performance, and increasing the budget from $K = 1$ to $K = 3$ lowers this score by $7.8\%$.

\subsection{Training Compute Overhead}
\label{app:compute}

\method{} changes only training. At test time, the student acts without the teacher model, hints, or recovery, so \method{} adds no computation at inference. During training, preventive and recovery distillation add two computations to group-based RL. The first is the forward pass of the privileged self-teacher, which computes the distillation advantage of \Cref{eq:distill_adv}. The second consists of the \emph{recovery rollouts}, which query the teacher model for a recovery action and sample a recovery response at every recovery turn (\Cref{alg:method}). We do not include pivot detection, whose cost depends mainly on the choice of teacher model and how it is served.

We time both computations on ALFWorld with the Qwen3-1.7B student on one node of four H100 GPUs, where GRPO takes $367.3$ seconds per training step (\Cref{tab:compute}). The self-teacher forward pass is the same kind of computation that self-distillation baselines such as OPSD and RLSD perform. With $K = 1$, recovery rollouts are inexpensive because the only recovery turn starts from the post-mistake state that the rollout has already reached, so it needs no environment replay. Each later recovery turn must instead be reached through environment replay (\Cref{app:method_details}), and the time of recovery rollouts grows by more than ten times from $K = 1$ to $K = 2$. Among the three main benchmarks, only ALFWorld uses more than one recovery turn (\Cref{tab:method_hparams}). On WebShop, where $K = 1$ is the selected budget, recovery rollouts likewise add only $12.2\%$ to the time per training step of GRPO, measured with $16$ GPUs per run over the first $41$ training steps.

\begin{table}[h]
\centering
\small
\setlength{\tabcolsep}{6pt}
\renewcommand{\arraystretch}{1.1}
\begin{tabular}{l cccc}
\toprule
\textbf{Computation} & $K = 0$ & $K = 1$ & $K = 2$ & $K = 3$ \\
\midrule
Self-teacher forward pass & \multicolumn{4}{c}{$17.2$ s ($+4.7\%$)} \\
Recovery rollouts & \na & $28.5$ s ($+7.8\%$) & $328.7$ s ($+89.5\%$) & $395.1$ s ($+107.6\%$) \\
\midrule
\textbf{Total overhead} & $+4.7\%$ & $+12.4\%$ & $+94.2\%$ & $+112.3\%$ \\
\bottomrule
\end{tabular}
\vspace{4pt}
\caption{\textbf{Training compute overhead of \method{} on ALFWorld with the Qwen3-1.7B student.} Each entry is the median time per training step of a computation that preventive and recovery distillation add to group-based RL, with the resulting increase over the time per training step of GRPO in parentheses. The last row sums both increases, and $K = 0$ is the preventive-only variant. The overhead stays small with one recovery turn and grows once later recovery turns require environment replay.}
\label{tab:compute}
\end{table}

\subsection{Recovery Analysis Details}
\label{app:recovery_analysis}

This analysis examines whether trained policies can recover after a pivotal mistake has already been committed, complementing the motivating replay study in \Cref{app:analysis}.

\paragraph{Replay protocol.} We use the same $72$ pivotal mistakes identified by the environment's symbolic oracle in \Cref{sec:evidence}. These oracle-labeled pivotal mistakes are independent of the teacher-detected pivotal turns used in training, so this evaluation does not reward agreement with the teacher's own labels. For each pivotal mistake, we replay the trajectory prefix up to and including the pivotal mistake in a fresh copy of the ALFWorld environment, verify that the reached state matches the recorded one, and let the policy continue on its own for the remainder of the $30$-turn budget. Each prefix is replayed $8$ times per policy, giving $576$ replays per policy.

\paragraph{Policies.} We compare four policies. \textbf{Base} is the untrained Qwen3-8B model. \textbf{Standard OPD} is the best checkpoint of the OPSD run in \Cref{app:analysis} from a sweep over training steps. \textbf{Preventive only} is \method{} trained for $160$ steps with recovery budget $K = 0$, as in \Cref{sec:discussion}. \textbf{\method{}} is the full method trained for $160$ steps with $K = 2$.

\paragraph{Recovery curves (\Cref{fig:case_study_curves}, top).} A replay counts as recovered within $x$ turns when it completes the task and its continuation after the pivotal mistake uses at most $x$ turns. The curve at each value of $x$ is the per-mistake mean averaged over the $72$ pivotal mistakes, so every pivotal mistake contributes equally regardless of how many of its replays succeed. Final recovery rates are $8.3\%$ (Base), $20.3\%$ (Standard OPD), $45.8\%$ (preventive only), and $72.7\%$ (\method{}).

\paragraph{Paired comparison per pivotal mistake.} Because every policy replays the same $72$ pivotal mistakes, we also compare each trained policy with the base model on each pivotal mistake separately. The recovery rate of a pivotal mistake is the fraction of its $8$ replays that recover, and the sign of its difference from the rate of the base model marks the pivotal mistake as improved, worsened, or unchanged. Standard OPD improves the recovery rate on $20$ of the $72$ pivotal mistakes, worsens it on $4$, and leaves $48$ unchanged. The preventive-only variant improves $47$, worsens $6$, and leaves $19$ unchanged. \method{} improves $60$ and worsens none, leaving $12$ unchanged. These counts come from the same replays as the recovery curves, and they show that the higher recovery rate of \method{} extends across most pivotal mistakes rather than coming from a few of them.

\paragraph{Recovery efficiency (\Cref{fig:case_study_curves}, bottom).} Among the replays that recover, we report the average number of turns from the pivotal mistake to task completion, with $95\%$ bootstrap confidence intervals ($2{,}000$ draws, seed $0$). The gray reference bar reports the average remaining optimal trajectory at the post-mistake state, computed over the $72$ pivotal mistakes. Because each policy recovers on a different number and mixture of pivotal mistakes, the efficiency values are conditioned on different sets of replays: $48$ (Base), $117$ (Standard OPD), $264$ (preventive only), and $419$ (\method{}). Despite this difference in conditioning, the per-policy recovered sets have near-identical own-set optimal means ($5.2$--$5.8$ turns), so the common reference bar provides a fair comparison across policies.

\subsection{Divergence from the Self-Teacher Details}
\label{app:recovery_kl}

\Cref{fig:recovery_kl} compares the Qwen3-8B checkpoints of the preventive-only variant ($K = 0$) and a separate \method{} run with $K = 1$ at training steps $140$ and $160$. Each pair of checkpoints scores the ALFWorld training rollouts of the preventive-only variant from a disjoint window of steps, i.e., steps $130$--$149$ for the checkpoints at step $140$ and steps $150$--$160$ for those at step $160$. Both variants therefore score the same responses at the same turns, and every scored state is one that the preventive-only variant visits itself. For each turn, we compute the forward KL divergence from each policy's privileged self-teacher to the policy over the full vocabulary and average it over the response tokens. The self-teacher receives the hints recorded with these rollouts, which include the trajectory-level lesson at every turn and the gold action at candidate turns. The preventive-only variant records no recovery actions, so this divergence uses the recorded hints rather than the recovery-action hint of \Cref{eq:recovery}, and it serves as a proxy for the recovery loss rather than the loss itself (\Cref{lem:kl_bound}). We align each trajectory at its first teacher-detected pivotal turn and exclude turn $0$. The analysis covers $600$ trajectories, $351$ of which contain a teacher-detected pivotal turn. The shaded bands show $95\%$ bootstrap confidence intervals clustered over trajectories ($2{,}000$ draws, seed $0$). In \Cref{fig:recovery_kl}, both students track their self-teachers closely before the pivotal mistake, and the divergence spikes at the pivotal turn, where the hint names the gold action.

\section{Related Works}
\label{app:related_work}

\paragraph{On-policy distillation and the direction of KL.} Sequence-level knowledge distillation trains a student on outputs sampled from the teacher \citep{kim2016sequence, yang2026sfd}. On-policy distillation instead supervises the student on its own samples, and prior work discusses which KL direction suits this setting. GKD compares forward and reverse KL on student-generated sequences \citep{agarwal2024gkd}, and MiniLLM favors reverse KL so that the student does not overestimate regions where the teacher places little mass \citep{gu2024minillm,lu2025onpolicydistillation}. For multi-turn agents, recent methods reweight turns or schedule the distilled trajectory depth \citep{zhou2026turnopd,wang2026tcod,zhong2026sod,zhang2026stepopsd,wang2026agentopsd}, but they still distill only on student-sampled responses. \method{} chooses the direction by where the supervision is needed. Preventive distillation applies reverse KL to a response that the student has already sampled, while recovery distillation applies forward KL to self-teacher responses that the student rarely produces, in the spirit of sequence-level distillation.

\paragraph{Self-correction and learning from failures.} Several lines of work teach language models to correct or learn from their own mistakes. Reflexion stores verbal reflections on failed trials in memory instead of updating the model \citep{shinn2023reflexion}. SCoRe and RISE train models to revise their responses over multiple attempts \citep{kumar2024score,qu2024rise}. For agents, Agent-R splices failed trajectories with correct continuations found by Monte Carlo tree search \citep{yuan2025agentr}. ETO and NAT fine-tune agents on failed trajectories through contrastive pairs and explicit failure markers, respectively \citep{song2024eto,wang2024nat}, and LEMA fine-tunes on mistake corrections written by a stronger model \citep{an2023lema}. These methods rely on outcome rewards or offline revision data, whereas \method{} provides dense teacher supervision at the post-mistake states that the student's own mistakes create.

\paragraph{Credit assignment at key turns.} Outcome rewards reveal little about which turn caused a failure. GiGPO estimates step-level advantages by grouping actions taken from repeated states across rollouts \citep{feng2025gigpo}, and process reward models score intermediate steps \citep{lightman2023prm,wang2024mathshepherd}. PivotRL \citep{yi2026pivotrl} and OPID \citep{yang2026opid} concentrate training on the few turns that determine the outcome. Like standard OPD, however, these methods assign credit only to actions that the student samples, whereas \method{} also distills recovery actions that the student rarely produces after its own pivotal mistakes.

\paragraph{Interactive imitation learning.} Recovery distillation resembles interactive imitation learning, where an expert supervises the learner at the states that the learner itself reaches. DAgger queries the expert at states visited by the learner's policy \citep{ross2011reduction}, and HG-DAgger lets a human expert take over when the learner enters unsafe states \citep{kelly2019hgdagger}. Recovery distillation similarly queries the teacher at the post-mistake states produced by the student's own mistakes. It differs in that the teacher only names the recovery action, while the token-level target comes from the student's own hinted distribution.

\paragraph{Privileged information.} Learning using privileged information provides extra information during training that is unavailable at test time \citep{vapnik2009lupi}, as in asymmetric actor-critic methods whose critic observes the full state \citep{pinto2018asymmetric}. Recent work on language models conditions a self-teacher on privileged information \citep{zhao2026opsd,penaloza2026pid, he2026sdzero, kaur2026rethinking, liu2026mixsd}. \method{} uses the named gold or recovery action as the privileged information and trains the student without it.

\section{Limitations and Open Directions}
\label{app:limitations}

\paragraph{Replayable environments.} Recovery turns after the first are reached by replaying the recorded actions in a copy of the environment (\Cref{app:method_details}), so recovery budgets above $K = 1$ require an environment that reproduces recorded observations exactly. ALFWorld, WebShop, and Search-based QA meet this requirement, and our budget ablation uses up to $K = 3$ on each (\Cref{fig:ablation_k}). Many interactive environments, such as live websites, do not. Extending pivot detection to earlier turns of such long episodes and reaching later recovery turns without exact replay remain open directions.

\paragraph{Dependence on the teacher model.} \method{} trains toward the actions that the teacher model names, so a wrong gold action or recovery action becomes a wrong distillation target. On ALFWorld, pivot detection agrees with the oracle within one turn in $77.8\%$ of failed trajectories on average (\Cref{tab:teacher_oracle}), so in the remaining cases, supervision lands away from the first oracle-labeled pivotal turn. The mismatch check can also flag a reasonable alternative action as pivotal (\Cref{app:method_details}). When the student serves as its own teacher, \method{} remains the best method in \Cref{fig:self_distillation}, but its ALFWorld success rate falls short of that under the stronger teacher (\Cref{tab:main_results}). Estimating the reliability of each named action before distilling it could reduce this dependence.

\paragraph{Tuning and scope of the diagnosis.} The recovery budget $K$ and the recovery weight $w_\text{rec}$ interact, and we select both per benchmark on validation (\Cref{app:hyperparameters}). Applying \method{} to a new benchmark therefore requires a sweep over them. Our diagnosis of pivotal mistakes relies on the symbolic oracle of ALFWorld, which the other benchmarks lack, so on those benchmarks only the teacher model estimates where pivotal turns occur. Finally, every agent sees a history window of at most five turns (\Cref{tab:train_hparams}). The turns that the agents waste after a pivotal mistake in \Cref{sec:evidence} are counted under this window, so part of this waste may come from the limited history.

\newpage

{
  \small
  \bibliographystyle{unsrt}
  \bibliography{ref}
}

\end{document}